\def\BibTeX{{\rm B\kern-.05em{\sc i\kern-.025em b}\kern-.08em
    T\kern-.1667em\lower.7ex\hbox{E}\kern-.125emX}}
\begin{document}

\title{Heterogeneous Graph Neural Network for Privacy-Preserving Recommendation}

\author{\IEEEauthorblockN{
Yuecen Wei\IEEEauthorrefmark{2}\IEEEauthorrefmark{3},
Xingcheng Fu\IEEEauthorrefmark{4}\IEEEauthorrefmark{5},
Qingyun Sun\IEEEauthorrefmark{4}\IEEEauthorrefmark{5},
Hao Peng\IEEEauthorrefmark{4},
Jia Wu\IEEEauthorrefmark{6},
Jinyan Wang\IEEEauthorrefmark{1}\IEEEauthorrefmark{2}\IEEEauthorrefmark{3} and
Xianxian Li\IEEEauthorrefmark{1}\IEEEauthorrefmark{2}\IEEEauthorrefmark{3}}

\IEEEauthorblockA{\IEEEauthorrefmark{2}Guangxi Key Lab of Multi-source Information Mining \& Security, Guangxi Normal University, Guilin, China}
\IEEEauthorblockA{\IEEEauthorrefmark{3}School of Computer Science and Engineering, Guangxi Normal University, Guilin, China}
\IEEEauthorblockA{\IEEEauthorrefmark{4}Beijing Advanced Innovation Center for Big Data and Brain Computing, Beihang University, Beijing, China}
\IEEEauthorblockA{\IEEEauthorrefmark{5}School of Computer Science and Engineering, Beihang University, Beijing, China}
\IEEEauthorblockA{\IEEEauthorrefmark{6}School of Computing, Macquarie University, Sydney, Australia} 
 Email:
 weiyc@stu.gxnu.edu.cn,
\{fuxc,sunqy,penghao\}@act.buaa.edu.cn, \\
 jia.wu@mq.edu.au, 
\{wangjy612,lixx\}@gxnu.edu.cn
 
 \thanks{\IEEEauthorrefmark{1}Corresponding author.}
}

\maketitle

\begin{abstract}
Social networks are considered to be heterogeneous graph neural networks (HGNNs) with deep learning technological advances. 
HGNNs, compared to homogeneous data, absorb various aspects of information about individuals in the training stage. 
That means more information has been covered in the learning result, especially sensitive information. 
However, the privacy-preserving methods on homogeneous graphs only preserve the same type of node attributes or relationships, which cannot effectively work on heterogeneous graphs due to the complexity. 
To address this issue, we propose a novel heterogeneous graph neural network privacy-preserving method based on a differential privacy mechanism named \textbf{HeteDP}, which provides a double guarantee on graph features and topology. 
In particular, we first define a new attack scheme to reveal privacy leakage in the heterogeneous graphs. 
Specifically, we design a two-stage pipeline framework, which includes the privacy-preserving feature encoder and the heterogeneous link reconstructor with gradients perturbation based on differential privacy to tolerate data diversity and against the attack. 
To better control the noise and promote model performance, we utilize a bi-level optimization pattern to allocate a suitable privacy budget for the above two modules. 
Our experiments on four public benchmarks show that the HeteDP method is equipped to resist heterogeneous graph privacy leakage with admirable model generalization.

\end{abstract}

\begin{IEEEkeywords}
privacy-preserving, recommendation, differential privacy, heterogeneous graph
\end{IEEEkeywords}

\section{Introduction}
%Background%
The heterogeneous graph is an extraordinary information network, which consists of multiple node types and multiple relation types~\cite{pengCIKM2019}.
Social relations are one of the networks that are most complex and closest to people's lives. 
According to their interactions and inter-dependencies, recommendation predicts the products the user will purchase while inferring the user's implicit tendency~\cite{Diifnet++,GNNRes2019}. 
Therefore, heterogeneous information networks (HINs)~\cite{hg1-2020} are widely used in recommender systems due to their enriched heterogeneous data. 
For example, in movie recommendation, entities have not only users and movies but also stores, and the relationship has collections in addition to purchases~\cite{SurveyGNNRecSys2021}, etc. 
For adapting the non-Euclidean structure of HINs, existing works leverage high-level information~\cite{hgmagnn-2020,HGT2020,pengTKDE2021,HINCIKM2021,HINAAAI2021} by other platforms sharing (e.g., logging in with a third-party account)~\cite{HighOrderRec2020,RecAgainstAI2021,SocialNetworkPrivacy2020,HypergraphCNRec2021} or semantic-level information from multiple entities~\cite{HIN2Vec2017, metapath2vec2017, pengCIKM2019}. 
In this way, these works always fuse the social network data and other side information of the users and items as a unified heterogeneous graph to improve model performance. 
However, while HINs boost recommendation capabilities, they also bring an additional risk of privacy leakage. 

\begin{figure}[t]
	\centering
	\includegraphics[width=0.48\textwidth]{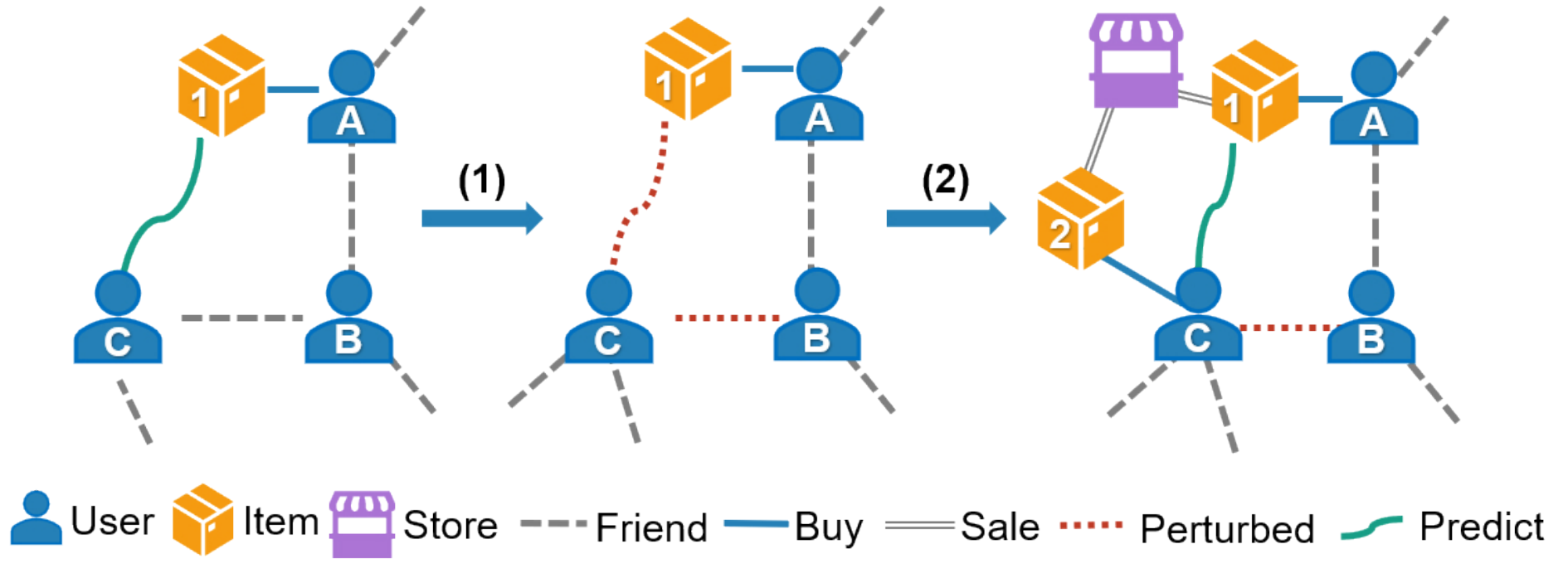}
    \vspace{-1em}
	\caption{An example of privacy risk from a homogeneous graph to a heterogeneous. 
	Change $(1)$ represents general privacy-preserving measures for nodes on the homogeneous graph. 
    Change $(2)$ indicates that the former method has a poor protection effect on heterogeneous graphs because more node types are considered. 
}
\vspace{-2em}
\label{example_1}
\end{figure}
Graph neural networks (GNNs) are widely used to implement heterogeneous graph learning and achieve remarkable results, as a popular and powerful graph representation model~\cite{sun1www21,AGE2021arxiv,sun2aaai2022,pengTNNLS2022,sun3CIKM2022}, such as recommended systems~\cite{res2-2019,res3-2019,pengSIGIR2020,hg2-2020}. 

However, most existing works focus on how to improve the representational power of graphs and ignore the security issues of sensitive information in graph data. 
For user privacy, some non-Euclidean data may more intuitively discover the relationships between users and some sensitive information~\cite{pengCIKM2022}, such as social relationships~\cite{SocialNetworkPrivacy2020}, behavioral trajectories~\cite{trajectoriesProtect2021,locationAndSemanticPrivacy2021,locationAndSemanticDP2019}, and medical records. 
While people benefit from the convenience of the recommendation, they are faced with recorded behavior data and learned and used all aspects of information that would bring a series of privacy leakage risks. 
In the real social world, some malicious people can obtain individuals' sensitive characteristics from enriching recommendations~\cite{SurveyGNNRecSys2021}, such as identification and phone number, address, and even social relationships. 
The privacy leakage risk of this heterogeneous information is reflected in both feature and topology levels. 

Recently, to address privacy problems in graph data, some existing works focus on privacy leakage in graph-based~\cite{obfuscationSemantic2021,SurveyHereData2021}. 
Differential privacy~\cite{RecAgainstAI2021,DPGGAN2021} based on data distribution perturbation, as advanced privacy-preserving technology, is widely used in deep learning because of the strict mathematical definition. 
Therefore, there is a remarkable limitation: the privacy-preserving method of a homogeneous graph cannot solve the problem caused by heterogeneity. 
For example, different types of nodes may no longer be independent of each other in features and topology but have semantic dependencies. 
On the one hand, \figurename~\ref{example_1} illustrates an inference and preservation between homogeneous and heterogeneous graphs. 
The model predicts user $C$ will buy the item by neighbor relationships. 
Specifically, the inference is drawn due to $A$'s historical shopping record, and $B$ is a neighbor of both $A$ and $C$. 
The existing works protect the direct relationship between users by disturbing their links, reducing the predicted probability. 
However, different types of nodes and edges exist in heterogeneous social networks, respectively.
When other node types are considered on the graph, we still can infer the buying action since the existing homogeneous graph methods only pay attention to the influence between the same node types. 
On the other hand, we are assuming that a malicious attacker can compare the target network with another network whose topology is similar and public. 
The background knowledge allows him to obtain connections between arbitrary nodes regardless of node types and analyze the semantics to understand the preferences of a particular user.
\figurename~\ref{example_2} shows that user $A$ is the only node in the subgraph with degree two and in one quadrilateral and business $B$ is the node with degree four and in two quadrilaterals. 
The attacker can utilize the topology of heterogeneous graphs to infer that $A$ has purchased items from $B$. 
If we could change the links between nodes while keeping the topological properties of each node as much as possible, the edges in the graph would not be directly exposed. 
The examples of the above attack methods show that the traditional naive differential privacy based on the I.I.D assumption is difficult to apply to the heterogeneous graphs of non-I.I.D directly. 

\begin{figure}[t]
	\centering
	\includegraphics[width=0.48\textwidth]{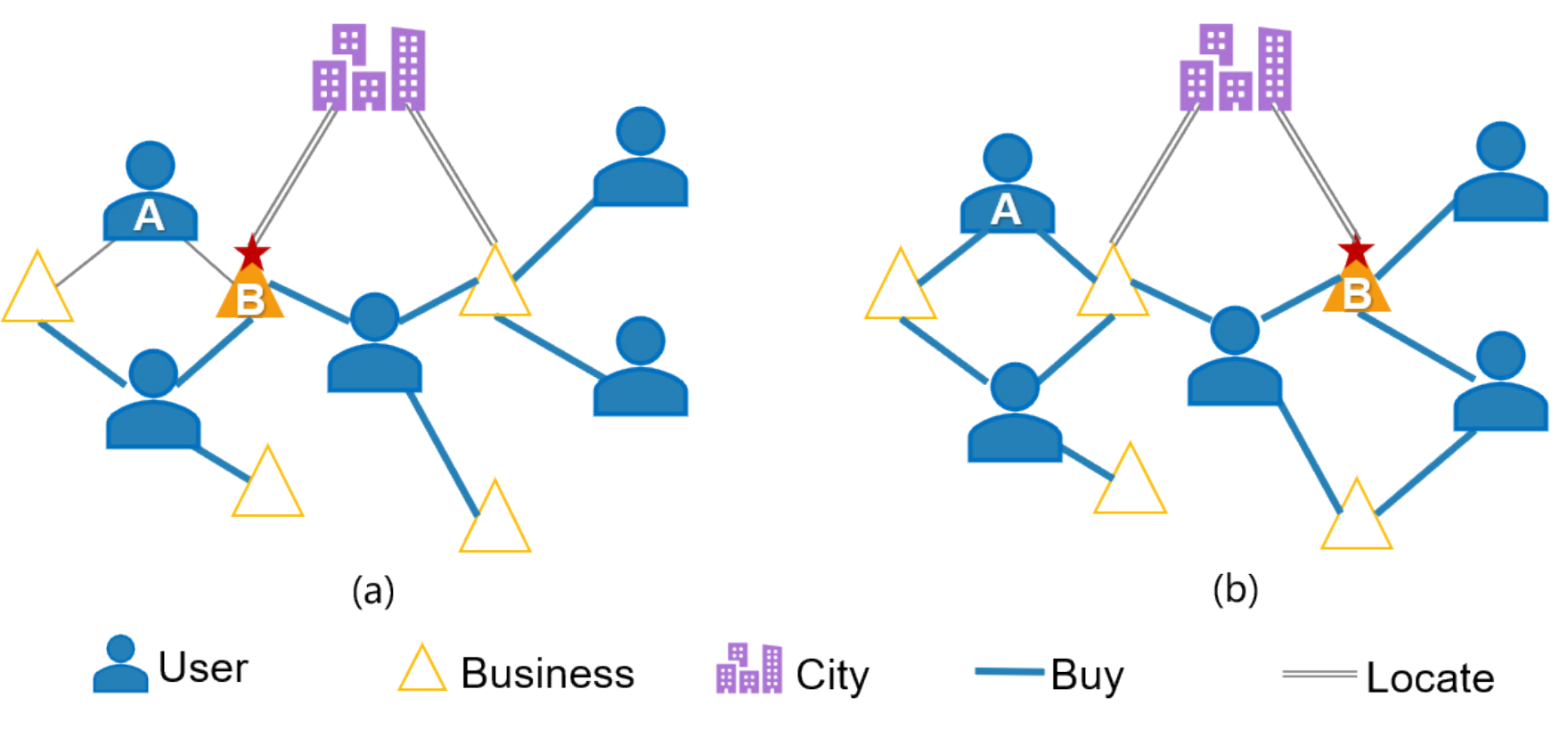}
    \vspace{-1em}
	\caption{An example of privacy risk from topology properties and topology protection. 
	$(a)$ The original graph structure. 
	$(b)$ The topological structure after perturbation. 
}
\vspace{-2em}
\label{example_2}
\end{figure}

Consequently, the core issue is, ``\textit{Can we put forward a heterogeneous graph neural network for privacy-preserving recommendation model which is able to adapt to the heterogeneity of graph data with resisting the `betrayal' of graph topology and different neighbors?}'' 
In conclusion, there are three extraordinary challenges in heterogeneous graphs about privacy-preserving: 
$(1)$ privacy is leaked through different types of higher-order neighbor information; 
$(2)$ even if the topology of homogeneous nodes is changed, privacy can still reveal the relationship between the same node types through high-level semantics;  
$(3)$ the difficulty lies in how to trade off privacy guarantees and compelling predictions. 

To resolve above problem, we propose a novel \underline{\textbf{Hete}}rogeneous Graph Neural Network Privacy-Preserving method based on \underline{\textbf{D}}ifferential \underline{\textbf{P}}rivacy named \textbf{HeteDP}\footnote{The source code is released at \url{https://github.com/AixWinnie/HeteDP}. }. 
First, we define a novel privacy leakage scenario for heterogeneous graph recommendations, and further, we reveal the privacy leakage risks associated with the heterogeneity of heterogeneous graphs. 
Specifically, We designed two stages of DP strategies to guarantee the privacy of graph features and topology for the privacy leakage problem of the heterogeneous graph. 
We propose a reasonable feature perturbation method based on a heterogeneous attention mechanism to encode the node representations. 
The sensitivity of features' Gaussian noise is learned by the neighbor influence and relationship influence of nodes under different relational subgraphs. 
Then, we input the perturbed node representations to a variational graph auto-encoders (VGAE)~\cite{VGAE2016} of the heterogeneous graph for reconstructing the privacy-preserving topology. 
The reconstructor can set learnable gradient clipping hyperparameters as noise sensitivity to clip and perturb the gradients. 
In addition, to solve the privacy budget allocation problem of global differential privacy, we design a bi-level optimization algorithm for HeteDP. 
We summarize our main contributions as follows: 
\begin{itemize}
    \item Aiming at the nature of heterogeneous information networks, we define a novel privacy leakage scenario and reveal privacy leakage risks for heterogeneous graph recommendations. 
    \item We propose a novel unsupervised privacy-preserving learning framework, named Heterogeneous Graph Neural Network Privacy-Preserving with Differential Privacy (HeteDP). 
    HeteDP is a two-stage pipeline framework, which can preserve the privacy of the feature and topology of the heterogeneous graph. 
    \item We design a adaptive privacy budget allocation by using bi-level optimization to balance the privacy and utility of HeteDP. 
    \item Experiments demonstrate the adaptability and generalization performance of the model on multiple real-world datasets. 
    We further analyze the necessity of each part of HeteDP and the feasibility of the whole model in detail. 
\end{itemize}

\section{Related Work}\label{sec:relatedwork}
\subsection{Heterogeneous Graph Neural Network}
HGNNs~\cite{HetGNN2019,RGCN2018,GCMC2017} are a powerful representation learning method with outstanding generalization ability. 
Existing models can fully use intricate information in heterogeneous networks to learn more inner information and improve model performance. 
RGCN~\cite{RGCN2018} and HetGNN~\cite{HetGNN2019} focus on information aggregation on multi-relational heterogeneous graphs, using weight matrices and random walks to aggregate information with different neighbors. 
Metapath2vec~\cite{metapath2vec2017} and HIN2Vec~\cite{HIN2Vec2017} learn node representations based on meta-path random walks to incorporate semantic information in heterogeneous graphs. 
With the contextualization of advanced research, many works have also made excellent progress in recommendations. 
DiffNet++~\cite{Diifnet++} uses attention to fuse the user's node neighbor and interest preference to obtain the embedding of users and items. 
RecoGCN~\cite{RecoGCN2019} is a relation-aware GNN that aggregates embeddings on meta-paths by computing semantic weights with an attention mechanism. 
However, the presentation and application of abundant data will undoubtedly expose more user information, which is more conducive to attackers maliciously inferring and obtaining sensitive user data. 

\subsection{Graph Privacy-Preserving}
As GNNs play a vital role in deep learning, the privacy issue in graph representation learning is exposed. 
Some early works attempted to preserve the privacy of graph data and achieved meaningful results. 
These works preserve users with personalized privacy-preserving~\cite{PersonalPrivacy2010} and leverage anonymization mechanisms to prevent attackers from inferring sensitive information. 
Recently, DPGGAN~\cite{DPGGAN2021} has performed differential privacy in GNNs by referring to DP-SDG~\cite{DPSGD2016} privacy-preserving design patterns and taking advantage of VGAE~\cite{VGAE2016}. 
GERAI~\cite{RecAgainstAI2021} is a recommendation model which combines GCN and DP to ensure the utility of the learning model while preventing users from attribute inference attacks. 
To improve the utility of privacy-preserving models, it is necessary to personalize the privacy budget for different types of data and reduce unnecessary noise injection~\cite{HDPviaGraph2022}. 

However, with the addition of more side information, the inference capability of the attackers may be enhanced, and the existing methods are difficult to adapt to the diversity of the heterogeneous graph.

\section{Preliminaries and Problem Definition}
Differential privacy~\cite{DP2006} is recognized as one of the quantifiable and practical privacy-preserving models. 
The basic idea is that any computation cannot be significantly affected by any operation such as add, delete and modify. 
Even if the attackers know all records except this one, they cannot obtain any information from it. 
Two adjacent datasets $D$ and ${D}' $ differ by at most one record and are defined as follows. 

\textbf{$\left ( \epsilon ,\delta  \right ) $-Differential Privacy~\cite{DP20062}. }
A random algorithm $\mathcal{M} $ satisfies $\left ( \epsilon ,\delta  \right ) $-Differential Privacy for any two neighboring data sets $D$ and $D^{\prime }$ and any possible subset of output 
$\mathcal{O} \subseteq Range\left ( \mathcal{M}   \right ) $, and it holds that
\begin{equation}
\begin{aligned}
    \mathrm{Pr} \left [ \mathcal{M} \left ( D \right ) \in \mathcal{O} \right ] \le e^{\epsilon } \mathrm{Pr} \left [ \mathcal{M} \left ( D^{\prime }  \right ) \in \mathcal{O} \right ] +  \delta. 
\end{aligned}
\end{equation}
The privacy strength of DP increases as the privacy budget decreases,  which is controlled by $\epsilon$ and $\delta$. 
Thus, $\left ( \epsilon ,\delta  \right ) $-DP is guaranteed by adding appropriate noise to the output of the algorithm, and the amount of injected noise is calibrated to the sensitivity. 

\textbf{Sensitivity~\cite{DP20062}. }
Given any query $\mathcal{S} $ on $D$, the sensitivity for any neighboring data sets $D$ and $D^{\prime }$ which is defined as
\begin{equation}\label{eq:defineSens}
\begin{aligned}
    \Delta _{2} \mathcal{S} =\max_{D,D^{\prime } } \left \| \mathcal{S} \left ( D \right ) - \mathcal{S} \left ( D^{\prime }  \right )   \right \|_{2} .
\end{aligned}
\end{equation}

\textbf{Gaussian Mechanism~\cite{GaussianDP2014}. }
Let $\mathcal{S} : D \to \mathbb{O} ^{\mathcal{K} } $ be an arbitrary $\mathcal{K}$-dimensional function and define its $l_{2}$ sensitivity to be $\Delta _{2} \mathcal{S}$. 
The Gaussian Mechanism with parameter $\sigma $ adds noise scaled to $\mathcal{N} \left ( 0,\sigma ^{2}  \right ) $ to each of the $\mathcal{K}$ components of the output. 
Given $\epsilon \in \left ( 0,1 \right ) $ be arbitrary, the Gaussian Mechanism is $\left ( \epsilon ,\delta  \right ) $-DP with
\begin{equation}
\begin{aligned}
    \sigma \ge \sqrt{2\ln_{}{\left ( 1.25/\delta  \right ) } } \Delta _{2} \mathcal{S}/\epsilon .
\end{aligned}
\end{equation}

Adding noise is the primary means to implement privacy-preserving by differential privacy. 
In this work, we will apply Gaussian noise to the node features and link prediction gradients of the heterogeneous graph $G$, respectively, and the overall form is defined as
\begin{equation}\label{eq:GaussionNoise}
\begin{aligned}
    \mathcal{M} \left ( G \right ) \overset{\triangle }{=} \mathcal{S} \left ( G \right ) +\mathcal{N} \left ( 0,\left ( \triangle _{2}\mathcal{S} \right )^2 \sigma^2   \right ) ,
\end{aligned}
\end{equation}
where $\Delta _{2} \mathcal{S}$ controls the amount of noise in the generated Gaussian distribution from which we will sample noise into the target. 

\textbf{Privacy Risk Analysis.} In most social networks, the data is non-I.I.D because the message passing between the information causes them to be interdependent and interact with each other, so the method based on a series of assumptions in which the data is I.I.D is no longer applicable in our scenario. 
The existing works~\cite{DPGGAN2021} only consider that the friends may influence a node at high levels, and they usually reduce the probability of malicious attackers stealing user interest orientations by perturbing the edges between nodes. 
However, we take many aspects of information in the non-Euclidean graph data, which increases the complexity of the data, so that the attacker can obtain the user's preferences by inferring the semantic information between nodes from other node types.  
Consequently, the existing protection methods are challenging to take effect in heterogeneous graphs. 
So our privacy-preserving objects are the graph's sensitive node features and topology structure. 

To summarize the above privacy leakage problem of heterogeneous graphs, we can transform the privacy problem on heterogeneous graphs subject into an associative differential privacy problem of edges with solid semantic correlation. 
This means that our problem further becomes a multi-objective optimization problem for representation learning as well as optimal privacy budget allocation. 

\textbf{Problem Definition.} We aim to maximize privacy-preserving while minimizing information loss due to the noise. 
Therefore, we combine optimal privacy budget allocation with HeteDP optimization as a multi-objective optimization problem. 
There is a heterogeneous graph $G=\left ( V,E,\phi ,\psi  \right ) $ with an entity mapping function $\phi \left ( v \right ) : V\to A$ and a relation mapping function $\psi  \left ( e \right ) : E\to R$, where $V$ and $E$ are the set of nodes and edges. 
Each node $v\in V$ belongs to the node typeset $A$, and each edge $e\in E$ belongs to the edge typeset $R$. 
The graph has the meta-paths $m= a_{1} \overset{r_{1} }{\rightarrow} a_{2} \overset{r_{2} }{\rightarrow}\dots \overset{r_{N-1} }{\rightarrow}a_{N} $ constructed by nodes $a_{i} \in A\left ( i= 1,2,\dots ,N \right ) $ and edges $r_{i} \in R\left ( i= 1,2,\dots ,N \right ) $  , where $a_{i} = \phi \left ( v_{i}  \right ) $ and $r_{i} = \psi  \left ( e_{i}  \right ) =\psi  \left ( \left \langle v_{i},v_{i+1} \right \rangle   \right )$. 
Then, given an objective function with a node feature privacy-preserving learning $T\left ( x,y \right )$ and graph topology privacy-preserving learning $f\left ( x,y \right )$ on the privacy budget of $\epsilon _{f}$ and $\epsilon _{s}$, the problem can be defined as follows
\begin{equation}\label{eq:problem}
\begin{aligned}
    \min_{x\in \epsilon _{s} } T\left ( x,y \right ) ,~\mathbf{s.t.}~y\in F\left ( x \right ) ,
\end{aligned}
\end{equation}
where $F\left ( x \right ) =\mathrm{arg} \min_{y\in \epsilon _{f} }f\left ( x,y \right )  $ and global privacy budget $\epsilon = \epsilon _{f}+\epsilon _{s}$. 
Such problems are usually difficult to find a unified optimal solution, which is the same as the multi-objective optimization in existing graph learning.
We are inspired by the multi-head attention mechanism~\cite{Attention2017} and differentially private stochastic gradient descent~\cite{DPSGD2016}. 
We formulate two protection strategies for node and topology, respectively. 
In particular, we statute the recommendation problem on heterogeneous graphs to an edge prediction problem on graph topology. 
We show in \figurename~\ref{example_2} that the attacker cannot confidently infer that user $A$ has ever shopped in business $B$ with privacy-preserving, preventing him from guessing the user's interest. 
In the next section, we will specify our proposed privacy-preserving approach.

\begin{figure*}[t]
    \centering
    \includegraphics[width=1\linewidth, trim=0 15 0 40, clip]{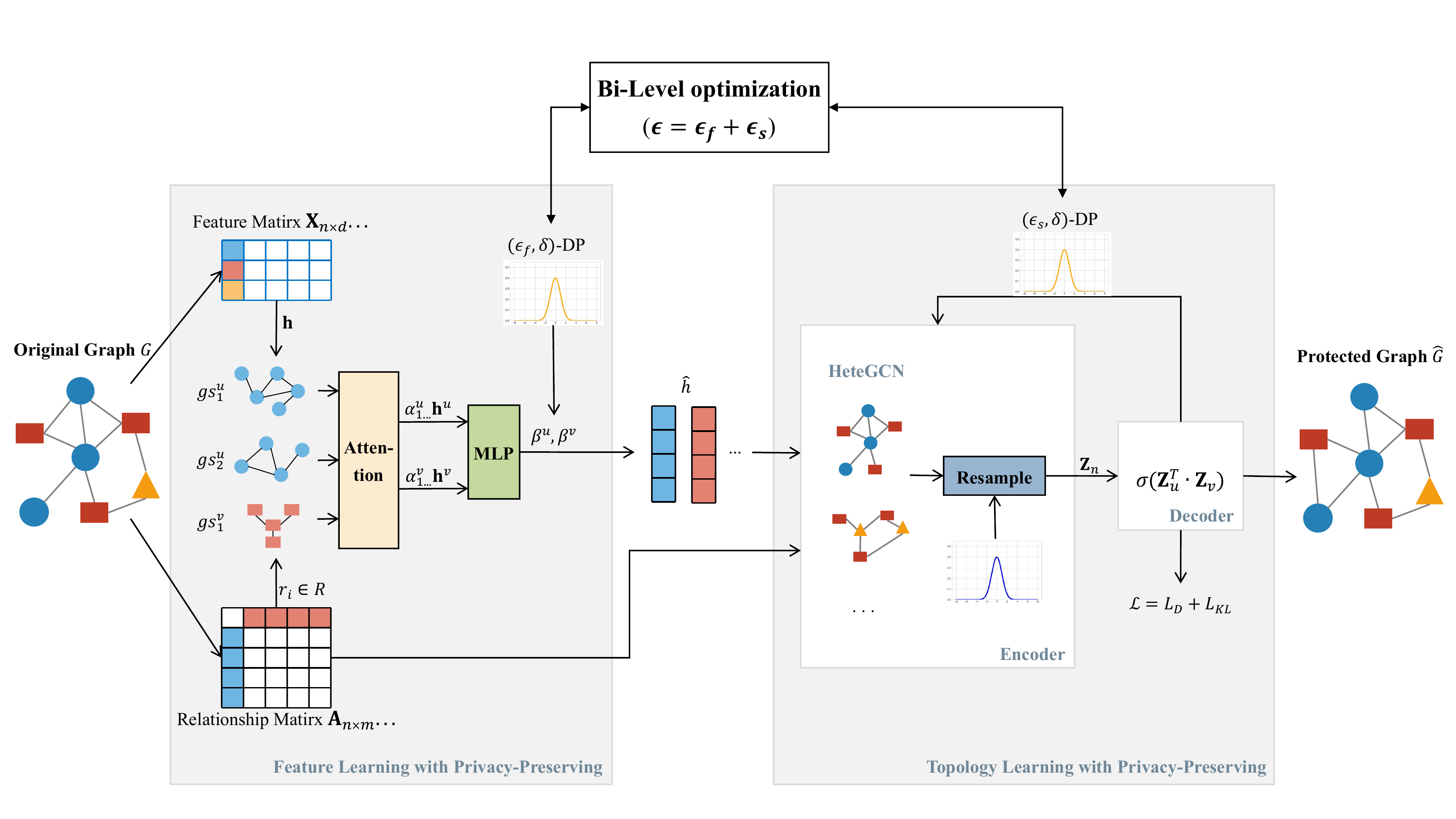}
    \hspace{20pt}
    \vspace{-2em}
    \caption{The framework of HeteDP. 
HeteDP consists of two major components: the privacy-preserving of feature learning and structure learning. 
The first part secures the node attributes, and the second part protects the graph topology. 
The two parts are constrained by a global privacy budget so that the perturbation to the model is within a reasonable range and the optimal accuracy is pursued. }
\label{framework}
\vspace{-1em}
\end{figure*}

\section{Proposed Methodology}
In this section, we introduce an overall learning framework of HeteDP, a heterogeneous graph neural network privacy-preserving with differential privacy, and show how to preserve individuals' privacy. 
\figurename~\ref{framework} shows our privacy-preserving and representations learning framework with the two aspects of DP strategies, where they perform privacy-preserving of node features and graph topology in heterogeneous graphs. 

\subsection{Feature Learning with Privacy-Preserving}\label{subsec:featlearning}
In this section, we detail node-level privacy-preserving and incorporate it into feature learning.
Since the nodes on the graph are affected by their neighbors and the semantic representation, we reflect the importance of various nodes by learning the influence weights of neighbors. 

For the subgraph $gs_m$ generated based on the all walks of each meta-path $m$, we map each node to a uniform space through linear transformation to get the embedding of the $l$-th layer neural network as
\begin{equation}
\begin{aligned}
    \mathbf{z}_{u}^{l} =w_{1} \mathbf{h}_{u}^{l}, 
\end{aligned}
\end{equation}
where $\mathbf{z}_{u}^{l}$ and $\mathbf{h}_{u}^{l}$ are the embedding and the original feature of the node $u$. 

To learn the degree of dependence between node $u$ and its neighbor node $v$, we leverage the attention mechanism and normalize the overall attention value to quantify that we calculate the attention score between nodes as
\begin{equation}\label{eq:nodeAttenScore}
\begin{aligned}
    W_{\left ( uv;m \right ) }^{l } =\textsc{Softmax} \left ( \textsc{Atten} \left (\mathbf{h}_{u}^{l} ,\mathbf{h}_{v}^{l}   \right ) ;m \right ). 
\end{aligned}
\end{equation}
Then, we introduce multi-head attention for node representation learning to pay attention to more aspects and comprehensive neighbor information. We also explicitly obtain the influence weight of node $u$ by other nodes simultaneously. 
So we obtain the multi-head attention coefficients and node representations between nodes on the $(l+1)$-th layer of each subgraph as
\begin{equation}\label{eq:nodeAttenCof}
\begin{aligned}
    \mathbf{h}_{u}^{\left ( m,l+1 \right ) }=||_{k=1}^{K} \sigma  \left ( \sum_{v\in N\left ( u \right ) }W_{\left ( uv;m \right ) }^{k } \mathbf{z}_{v}^{l }   \right ), 
\end{aligned}
\end{equation}
\begin{equation}
\begin{aligned}
    \alpha _{u}^{\left ( m,l+1 \right ) }= \sigma  \left (\frac{1}{K} \sum_{k=1}^{K}  \sum_{v\in N\left ( u \right ) }W_{\left ( uv;m \right ) }^{k }  \right ), 
\end{aligned}
\end{equation}
where K is the head of multi-head attention, $\sigma \left ( \cdot  \right ) $ is an activation function, and $\mathbf{z}_{v}^{l}$ is the embedding of the neighbor node. 

In particular, we concatenate the representations of nodes under each semantic without losing too much semantic dependency. 
The representation of M meta-paths in the graph is
\begin{equation}
\begin{aligned}
   \mathbf{z}_{u}^{m} =||_{m=1}^{M} \mathbf{h}_{u}^{m}. 
\end{aligned}
\end{equation}
Since nodes in heterogeneous data are more vulnerable to semantic inference attacks, we further consider the impact of semantic-level on node representation. 
The semantic attention from MLP as
\begin{equation}\label{eq:semanAttenWeight}
\begin{aligned}
   \mathcal{W} _{m} =\frac{1}{N} \sum_{u\in N}\textsc{LeakyReLU}  \left ( w_{2}\mathbf{z}_{u}^{m}+b \right ), 
\end{aligned}
\end{equation}
\begin{equation}\label{eq:semanAttenSoftmax}
\begin{aligned}
   \beta _{m} =\textsc{Softmax} \left ( \mathcal{W}_{m}   \right ) , 
\end{aligned}
\end{equation}
where $\mathcal{W} _{m}$ is the attention weight of $m$, $\beta _{m}$ is the normalized attention coefficient, and each node in $gs_m$ shares an attention coefficient. 
So we get the multi-level embeddings in the feature representation learning as
\begin{equation}
\begin{aligned}
   \mathbf{z} _{u} = \sum_{m}^{M} \beta _m \mathbf{z} _{u}^{m} . 
\end{aligned}
\end{equation}

Subsequently, we inject noise uniformly into the nodes individually, which means our noise fuse the weights of neighbor and semantic. 
We design the sensitivity and Gaussian noise on heterogeneous graph following Eq.~\eqref{eq:defineSens} as
\begin{equation}\label{eq:nodeNoise}
\begin{aligned}
    &\triangle _{2}\mathcal{S}_{feat} =\max_{D,D^{\prime } } \alpha _{u}^{m} \beta _{m}\cdot  \left \| \mathcal{S} \left ( D \right ) - \mathcal{S} \left ( D^{\prime }  \right )   \right \|_{2}, \\
    &\widetilde{\mathbf{h}} = \mathbf{z} _{u}+\lambda \cdot \mathcal{N}_{feat}^{u} \left ( 0, \sigma _{\epsilon_f}^{2} (\triangle _{2}\mathcal{S}_{feat})^2 \mathbf{I}  \right ) , 
\end{aligned}
\end{equation}
where $\lambda$ is a hyperparameter, the privacy budget $\epsilon _{f} < \epsilon $ and $\mathcal{N}_{feat}^{u}$ is the Gaussian distribution with mean $0$ and standard deviation $ \sigma _{\epsilon_f} \triangle _{2}\mathcal{S}_{feat} $ for $u$ to satisfy $\left ( \epsilon _{f},\delta   \right ) $-DP. 
\begin{algorithm}[!t]
\LinesNumbered
\label{alg}
\caption{HeteDP. 
} 
\KwIn{Heterogeneous Graph $G$; 
Negative sampling $k$; 
Local privacy budget $\epsilon_f $ or $\epsilon_s $; 
Node feature $\mathbf{h}$; 
Meta-path $m$; 
Multi-head attention $K$; 
Number of training epochs $T$; 
Batch size $B$; 
Noise scale $\sigma $; 
Gradient norm bound $C$
.}
\KwOut{Predicted result of the downstream task. 
}
Initialize all model parameters; \\
\tcp{Feature Learning}
Generate semantic subgraphs $gs$ from $G$ with $m$; \\
Calculate node attention $\alpha^{K} $ by Eq.~\eqref{eq:nodeAttenScore}, Eq.~\eqref{eq:nodeAttenCof}; \\
Calculate semantic attention $\beta _m$ by Eq.~\eqref{eq:semanAttenWeight}, Eq.~\eqref{eq:semanAttenSoftmax}; \\
Add noise $\mathcal{N}_{feat}^{\epsilon_f}$ to $\mathbf{h}$ by Eq.~\eqref{eq:nodeNoise}; \\
Get perturbed nodes $\widetilde{\mathbf{h}}$; \\
\tcp{Topology Learning}
Sample negative nodes $\mathbf{h}_k^{\prime } $ by Eq.~\eqref{eq:negSample}; \\
\For{$t = 1, 2, \cdots, T$}{
    Calculate node embeddings and reparameterization $q\left ( \cdot  \right ) $ by Eq.~\eqref{eq:encoder}; \\
    Reconstruct edges by Eq.~\eqref{eq:decoder}; \\
    Calculate loss between $\mathbf{h}$ and $\mathbf{h}_k^{\prime } $ by Eq.~\eqref{eq:HVGAEloss}; \\
    Get gradient $\mathbf{g}$ and update gradient $\widetilde{\mathbf{g}}$ with injecting noise $\mathcal{N}_{topo}^{\epsilon _{s},B,\sigma ,C}$ by Eq.~\eqref{eq:topNoise}; \\
}
\end{algorithm}

\subsection{Topology Learning with Privacy-Preserving}
We design a feature encoder for heterogeneous graphs and a topology reconstructor to execute the heterogeneous differentially private stochastic gradient descent to achieve privacy-preserving on topological structures. 

\textbf{Feature encoder. }
Inspired by RGCN~\cite{RGCN2018}, we build a simple RGCN model to extend VGAE~\cite{VGAE2016} to handle heterogeneous data. 
All of these aggregate representations of feature and relationship form a heterogeneous GCN model and the hidden layer in the model is
\begin{equation}\label{eq:HeteGCN}
\begin{aligned}
    &\mathbf{\widetilde{h} }_{dst}^{\left ( l+1 \right ) } =\textsc{Agg} \left ( f_{r} \left ( G,\mathbf{\widetilde{h} }_{src}^{l},\mathbf{\widetilde{h} }_{dst}^{l}  \right )|_{r\in R}   \right ) \\
    &\mathbf{s.t.}~\mathbf{\widetilde{h} }=\textsc{HeteGCN}(\mathbf{X},\mathbf{A} _r ), 
\end{aligned}
\end{equation}
where $f_{r}$ is the GCN module of each ${r\in R}$, $\mathbf{X}$ is node features, and $\mathbf{A} _r $ is the relationship matrix. 
The hidden layer representation of each node under the relational subgraph as
\begin{equation}\label{eq:HeteGCNLayer}
\begin{aligned}
    \mathbf{\widetilde{h} }_{u}^{\left ( l+1 \right ) }= \sigma \left ( \sum_{v\in N_{(u)} }\zeta   w^{l} \mathbf{\widetilde{h} }_{v}^{l}   \right ),
\end{aligned}
\end{equation}
where $\zeta  $ is a normalization constant, $w^{l}$ and $\mathbf{h}_{v}^{l}$ are the learnable weight matrices and neighbor node embeddings of the $l$-th layer. 

Since we transform the recommendation task on heterogeneou graph neural networks into a graph reconstruction problem, we follow the original intention of the link reconstruction task. 
We train a link prediction model by computing the difference in scores between two connected nodes and any pair of nodes. 
For example, there is an edge $e\in E$ between nodes $u\in V$ and $v\in V$ in graph $G$, and we want the score between $u$ and $v$ to be higher than the score between $u$ and $k$ nodes $v^{\prime } $ sampled from an arbitrary distribution $v^{\prime } \sim \mathrm{Pn} \left ( v \right ) $. 
We uniform sample a different sample for each iteration of training through the neighbor sampling of the multi-layer GNN as negative sampling 
\begin{equation}\label{eq:negSample}
\begin{aligned}
    \mathbf{h}_{v^{\prime } }^{k} \gets \textsc{NegSample} \left ( \mathbf{h}_{u} ,k  | \forall u\in V   \right ) . 
\end{aligned}
\end{equation}
Then, we adopt a two-layer HeteGCN model following Eq.~\eqref{eq:HeteGCN} as an encoder and utilize the reparameterization trick in training
\begin{equation}\label{eq:encoder}
\begin{aligned}
    q\left ( \mathbf{Z}|\mathbf{X},\mathbf{A}_{r}  \right ) =\prod_{i=1}^{N} \mathrm{Pn}\left ( \mathbf{z}_{i}|\mu_{i}^{r},\left ( \sigma _{i}^{2}  \right )^r \right ) , 
\end{aligned}
\end{equation}
where $\mathbf{z}$ is a stochastic latent sampling variable, $\mu _{r} =\textsc{HeteGCN}_{\mu } \left ( \mathbf{X},\mathbf{A}_{r} \right ) $ is the matrix of mean vectors $\mu_{i}^{r}$ and $\log_{\sigma }^{r} =\textsc{HeteGCN}_{\sigma } \left ( \mathbf{X},\mathbf{A}_{r}   \right ) $ is the matrix of standard deviation vectors $\sigma _{i}^{r}$. 

We compute the inner product between latent variables as a decoder to reconstruct the edge.  
We leverage the calculation to express the probability that there is a connection between two different types of nodes $\phi (\mathbf{z}_{u})$ and $\phi (\mathbf{z}_{v})$ as
\begin{equation}\label{eq:decoder}
\begin{aligned}
    p\left ( \mathbf{A}_{r}|\mathbf{Z} \right ) =\prod_{i=1}^{\left | A_{u}  \right | } \prod_{j=1}^{\left | A_{v}  \right | }\sigma \left ( \mathbf{z}_{u}^{T} \mathbf{z}_{v}  \right ) , 
\end{aligned}
\end{equation}
where $ \mathbf{z}_{u}^{T}$ represents the transpose of $\mathbf{z}_{u}$. 

Our goal is to enable the model to find patterns in the data during training and to discover some underlying structure.
Therefore, We can discover the interdependence and association of node $u$ and $v$ based on semantic association rules and calculate the score between the node pair with the unsupervised cross-entropy loss of the graph as
\begin{equation}
\begin{aligned}
    \mathcal{L} _{D}^{r} =-\log \sigma \left ( q_{u,v} \right ) -k\cdot \mathbb{E} _{v^{\prime }\sim \mathrm{Pn} \left ( v \right )}\log \left ( \sigma \left ( -p_{u,v^{\prime } } \right )  \right ), 
\end{aligned}
\end{equation}
where $k$ is the number of negative sampling. 
To further reduce the difference between generated samples and real samples, we compute their $\textsc{KL}$ divergence in the loss function as
\begin{equation}\label{eq:HVGAEloss}
\begin{aligned}
    \mathcal{L} =\mathcal{L} _{D}^{r}-\sum_{i\in \left \langle u,v \right \rangle } \textsc{KL} \left ( q_{i} ||p\left (\mathbf{Z}_{i}  \right ) \right ), 
\end{aligned}
\end{equation}
 where $p\left (\mathbf{Z}_{i}  \right )=\prod_{i}\mathrm{Pn} \left ( \mathbf{z}_{i}|0,\mathbf{I} \right ) $ is a Gaussian prior. 
 We take the state when the graph topology is learned as the optimal prediction of the link. 
 
\textbf{Topology reconstruction with privacy-preserving. } 
In this part, we introduce preserving semantic information in graphs by perturbing the gradient of the link prediction task. 
We inject the Gaussian noise to the training gradient, and further denote as $\triangle _{2}\mathcal{S}_{topo}=C$ following Eq.~\eqref{eq:GaussionNoise}. 

Then, for each iteration in training, we calculate the gradient of decoder $\mathbf{g}=\nabla \mathcal{L} $ from backpropagation, inject noise into the gradient after gradient clipping and before gradient update, and finally perform gradient descent. 
Thus, the perturbed gradients as
\begin{equation}\label{eq:topNoise}
\begin{aligned}
   \widetilde{\mathbf{g}} =\frac{1}{\left | B \right | } \left ( \sum_{i\in B} \mathbf{g}_i^R /\max \left ( 1,\frac{\left \| \mathbf{g}_{i}^R  \right \|_{2}  }{C }  \right )+\mathcal{N}_{topo}\left ( 0,\sigma _{\epsilon _{s} }^{2} C^{2}\mathbf{I}   \right ) \right ), 
\end{aligned}
\end{equation}
where $B$ and $|B|$ is the batch and size for each training iteration, $\left \| \mathbf{g}_{i}^R  \right \|_{2} $ is the $l_2$ norm of gradient clipping, and $\mathcal{N} _{topo}\left ( \cdot  \right ) $ is the Gaussian distribution with mean $0$ and standard deviation $\sigma _{\epsilon _{s} }C$. 
The distribution satisfies $\left ( \epsilon _{s},\delta   \right ) $-DP, where the privacy budget $\epsilon _{s}< \epsilon$. 
We control the sensitivity to noise by limiting the norm bound $C$ of a gradient. 
To adapt to the noise distribution in heterogeneous data, we utilize privacy accounting~\cite{DPSGD2016} to regulate the privacy budget of each iteration. 
We set a constant number $c_2$, the sampling probability $P$, and the number of iterations $T$ for training to make $\sigma \epsilon _{s} \ge c_{2} P\sqrt{T\log{1/\delta } } $. 

\subsection{Privacy-Preserving Analysis of HeteDP}
In this section, we give a general overview of the proposed privacy-preserving framework for heterogeneous graphs.
And then, we perform privacy analysis and proof. 
We aim to leverage a bi-level optimization strategy to maximize the privacy-preserving effect while minimizing the information loss due to noisy inputs. 
In general, our proposed scheme solves the problem of privacy budget allocation in both feature noise and topology noise as Eq.~\eqref{eq:problem}.
The whole algorithm process is elaborated in Algorithm~\ref{alg}. 
The features used in topology learning come from node representation learning~\ref{subsec:featlearning} as designed above. 
Moreover, we prove the privacy of HeteDP in the following theorem. 
\newtheorem{thm}{\bf Theorem}
\begin{thm}
A random function $\mathcal{M}$ is $\left ( \epsilon,\delta   \right ) $-DP if the privacy loss $\mathcal{C} _{\mathcal{M} } \left ( o,D,{D}'  \right ) $ satisfies $\mathrm{Pr} \left [ \mathcal{C} _{\mathcal{M} } \ge \epsilon  \right ] \le \delta $, where the privacy loss define as
\begin{equation}
\begin{aligned}
  \mathcal{C} _{\mathcal{M} } \left ( o,D,{D}'  \right ): =\ln{\frac{\mathrm{Pr} \left [ \mathcal{M}\left ( D \right ) =o \right ] }{\mathrm{Pr} \left [ \mathcal{M}\left ( {D}'  \right ) =o \right ] } }. 
\end{aligned}
\nonumber
\end{equation}
\end{thm}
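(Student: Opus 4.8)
The plan is to reduce the statement to the defining inequality of $(\epsilon,\delta)$-DP. Fix an arbitrary pair of neighboring datasets $D$ and $D'$ and an arbitrary measurable output region $\mathcal{O}\subseteq Range(\mathcal{M})$; by the definition recalled above, it suffices to show $\mathrm{Pr}[\mathcal{M}(D)\in\mathcal{O}]\le e^{\epsilon}\,\mathrm{Pr}[\mathcal{M}(D')\in\mathcal{O}]+\delta$. Throughout, I read the privacy loss as the random variable obtained by drawing $o\sim\mathcal{M}(D)$ and evaluating $\mathcal{C}_{\mathcal{M}}(o,D,D')$, and I take the hypothesis $\mathrm{Pr}[\mathcal{C}_{\mathcal{M}}\ge\epsilon]\le\delta$ to hold for this (arbitrary) neighboring pair.

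First I would isolate the outputs on which the likelihood ratio is excessive: set $S:=\{\,o:\mathcal{C}_{\mathcal{M}}(o,D,D')\ge\epsilon\,\}$, which by the definition of $\mathcal{C}_{\mathcal{M}}$ is exactly $\{\,o:\mathrm{Pr}[\mathcal{M}(D)=o]\ge e^{\epsilon}\,\mathrm{Pr}[\mathcal{M}(D')=o]\,\}$. The hypothesis then reads $\mathrm{Pr}[\mathcal{M}(D)\in S]=\mathrm{Pr}[\mathcal{C}_{\mathcal{M}}\ge\epsilon]\le\delta$. Next I would split $\mathrm{Pr}[\mathcal{M}(D)\in\mathcal{O}]=\mathrm{Pr}[\mathcal{M}(D)\in\mathcal{O}\cap S]+\mathrm{Pr}[\mathcal{M}(D)\in\mathcal{O}\setminus S]$. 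The first term is at most $\mathrm{Pr}[\mathcal{M}(D)\in S]\le\delta$. For the second term, every $o\in\mathcal{O}\setminus S$ satisfies $\mathrm{Pr}[\mathcal{M}(D)=o]\le e^{\epsilon}\,\mathrm{Pr}[\mathcal{M}(D')=o]$ by the complement of $S$; summing this pointwise inequality over $\mathcal{O}\setminus S$ (discrete range), or integrating the corresponding densities (the continuous case arising here from the Gaussian-perturbed features $\widetilde{\mathbf{h}}$ of Eq.~\eqref{eq:nodeNoise} and gradients $\widetilde{\mathbf{g}}$ of Eq.~\eqref{eq:topNoise}), gives $\mathrm{Pr}[\mathcal{M}(D)\in\mathcal{O}\setminus S]\le e^{\epsilon}\,\mathrm{Pr}[\mathcal{M}(D')\in\mathcal{O}\setminus S]\le e^{\epsilon}\,\mathrm{Pr}[\mathcal{M}(D')\in\mathcal{O}]$. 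Adding the two bounds gives the target inequality; since $D$, $D'$ and $\mathcal{O}$ were arbitrary, $\mathcal{M}$ is $(\epsilon,\delta)$-DP.

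The main obstacle is not a computation but a careful reading of the statement: the privacy loss has to be understood as a random variable over $o\sim\mathcal{M}(D)$, and the tail bound $\mathrm{Pr}[\mathcal{C}_{\mathcal{M}}\ge\epsilon]\le\delta$ must be imposed uniformly over all neighboring pairs $D,D'$; I would state this explicitly at the outset so that the set $S$ and the estimate $\mathrm{Pr}[\mathcal{M}(D)\in S]\le\delta$ are well-posed. The one genuinely technical point is the passage from the pointwise likelihood-ratio bound on $\mathcal{O}\setminus S$ to the measure-level bound: immediate by summation for a discrete output space, but requiring Radon--Nikodym densities with respect to Lebesgue measure for the continuous outputs produced by the Gaussian mechanism of Eq.~\eqref{eq:GaussionNoise} that HeteDP actually uses. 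This is standard, so I would invoke it rather than belabor it.

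Finally, I would note that deploying this theorem for the full HeteDP pipeline is a second step: one applies it to the feature-perturbation mechanism to obtain the $(\epsilon_f,\delta)$ guarantee (via the Gaussian-mechanism bound on $\sigma_{\epsilon_f}$) and to the gradient-perturbation mechanism to obtain the $(\epsilon_s,\delta)$ guarantee (via the privacy-accounting condition $\sigma\epsilon_s\ge c_2 P\sqrt{T\log(1/\delta)}$), and then composes the two stages sequentially, so that the overall budget is $\epsilon=\epsilon_f+\epsilon_s$ as in Eq.~\eqref{eq:problem}. That composition step is downstream of the theorem rather than part of it.
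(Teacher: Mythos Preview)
Your proof is correct and follows essentially the same approach as the paper: partition the output space by whether the privacy loss exceeds $\epsilon$, bound the ``bad'' piece by $\delta$ via the hypothesis, and bound the ``good'' piece by $e^{\epsilon}$ times the $D'$-measure via the pointwise likelihood-ratio inequality. The only cosmetic difference is that the paper swaps the roles of the symbols $S$ and $\mathcal{O}$ (it calls the bad set $\mathcal{O}$ and the arbitrary event $S$), but the decomposition and the two estimates are identical to yours.
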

\begin{proof}
Let us partition $\mathbb{O} $ as $\mathbb{O} =\mathcal{O} \cup{\mathcal{ O} }' $, where $\mathcal{O} =\left \{ o\in \mathbb{O} :\mathcal{C _{M} }\ge \epsilon _{f,s}  \right \} $ and ${\mathcal{ O} }' =\left \{ o\in \mathbb{O} :\mathcal{C _{M} }< \epsilon _{f,s}  \right \} $. 
For any $S\subseteq \mathbb{O} $, if $\mathrm{Pr} \left [\mathcal{C} _{\mathcal{M} } \left ( o,D,{D}'  \right )\ge \epsilon_{f,s} \right ]\le \delta $, we have
\begin{equation}
\begin{aligned}
  &\mathrm{Pr} \left [ \mathcal{M}\left ( D \right )\in S \right ]\\ &=\mathrm{Pr} \left [ \mathcal{M}\left ( D \right )\in S \cap \mathcal{O}  \right ]
  +\mathrm{Pr} \left [ \mathcal{M}\left ( D \right )\in S\cap {\mathcal{O}}'  \right ]\\
  &\le \mathrm{Pr} \left [ \mathcal{M}\left ( D \right )\in \mathcal{O}  \right ]+\exp\left (  \epsilon _{f,s} \right )  \mathrm{Pr} \left [ \mathcal{M}\left ( {D}'  \right )\in S\cap {\mathcal{O}}'  \right ]\\
  &\le \delta +\exp\left (  \epsilon_{f,s} \right )  \mathrm{Pr} \left [ \mathcal{M}\left ( {D}'  \right )\in S \right ],
\end{aligned}
\nonumber
\end{equation}
yielding $\left ( \epsilon,\delta   \right ) $-DP for the Gaussian mechanism, where $\epsilon _{f,s}$ denotes the privacy budget of noise on node features or topology. 
\end{proof}
\begin{thm}
Let $\mathcal{M} _1:D\to \mathcal{O} _1$ be an $\left ( \epsilon _f,\delta  \right ) $-DP algorithm, and $\mathcal{M} _2:D\to \mathcal{O} _2$ be an $\left ( \epsilon _s,\delta  \right ) $-DP algorithm. 
Their combination defined to be $\mathcal{A}=\mathcal{M} _{1,2}:D\to \mathcal{O} _1\times \mathcal{O} _2$ by the mapping: $\mathcal{A} \left ( x \right ) =\left ( \mathcal{M}_1\left ( x \right ) ,\mathcal{M}_2\left ( x \right )   \right ) $ is $\left ( \epsilon _f+\epsilon _s,\delta  \right ) $-DP. 
\end{thm}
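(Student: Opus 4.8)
The plan is to reduce the claim to the sufficient condition proved in the previous theorem. Since $\mathcal{M}_1$ and $\mathcal{M}_2$ draw their internal randomness independently, the joint mechanism $\mathcal{A}$ has output density $\mathrm{Pr}[\mathcal{A}(D)=(o_1,o_2)]=\mathrm{Pr}[\mathcal{M}_1(D)=o_1]\,\mathrm{Pr}[\mathcal{M}_2(D)=o_2]$, so its privacy loss factorizes additively,
\begin{equation}
\mathcal{C}_{\mathcal{A}}\!\left((o_1,o_2),D,D'\right)=\mathcal{C}_{\mathcal{M}_1}\!\left(o_1,D,D'\right)+\mathcal{C}_{\mathcal{M}_2}\!\left(o_2,D,D'\right).
\nonumber
\end{equation}
By the characterization of the previous theorem it then suffices to bound $\mathrm{Pr}[\mathcal{C}_{\mathcal{A}}\ge\epsilon_f+\epsilon_s]$, and since $\{\mathcal{C}_{\mathcal{A}}\ge\epsilon_f+\epsilon_s\}\subseteq\{\mathcal{C}_{\mathcal{M}_1}\ge\epsilon_f\}\cup\{\mathcal{C}_{\mathcal{M}_2}\ge\epsilon_s\}$, a union bound over the two tail events — each controlled by the $(\epsilon_f,\delta)$- and $(\epsilon_s,\delta)$-DP hypotheses — closes the argument, after which one invokes the previous theorem once more on $\mathcal{A}$.

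Concretely I would proceed as follows. First, make precise that $(\epsilon,\delta)$-DP of each $\mathcal{M}_i$ yields the tail bound $\mathrm{Pr}_{o\sim\mathcal{M}_i(D)}[\mathcal{C}_{\mathcal{M}_i}(o,D,D')\ge\epsilon]\le\delta$ — the converse direction of the previous theorem — either by the short standard argument that partitions the output space according to whether the likelihood ratio exceeds $e^{\epsilon}$, or by citing it. Second, combine the factorization above with the union bound, uniformly over adjacent $D,D'$. Third, apply the previous theorem to $\mathcal{A}$ to conclude $(\epsilon_f+\epsilon_s,\delta)$-DP. An equivalent and more elementary route avoids the privacy-loss variable entirely: slice a target event $S\subseteq\mathcal{O}_1\times\mathcal{O}_2$ along its second coordinate, $S_{o_2}=\{o_1:(o_1,o_2)\in S\}$, write $\mathrm{Pr}[\mathcal{A}(D)\in S]=\sum_{o_2}\mathrm{Pr}[\mathcal{M}_2(D)=o_2]\,\mathrm{Pr}[\mathcal{M}_1(D)\in S_{o_2}]$ using independence, apply the $(\epsilon_f,\delta)$-DP bound to $\mathrm{Pr}[\mathcal{M}_1(D)\in S_{o_2}]$ inside the sum (the $\delta$ term integrates out since the law of $\mathcal{M}_2(D)$ has total mass one), and then transfer $\mathcal{M}_2(D)$ to $\mathcal{M}_2(D')$ via $\mathcal{M}_2$'s guarantee, producing the product factor $e^{\epsilon_f+\epsilon_s}$.

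The main obstacle — the step I would be most careful about — is the accounting of the additive error and the passage from a set-level guarantee to something usable pointwise inside a sum. The naive union-bound (or slicing) argument produces a failure probability $\delta+\delta=2\delta$ rather than the single $\delta$ in the statement; the clean ways to land exactly on $\delta$ are to take one of the two mechanisms to satisfy pure $\epsilon$-DP, so that its likelihood ratio is bounded everywhere and contributes no error term, or to adopt the customary convention of absorbing the constant factor into $\delta$. Similarly, transferring $\mathcal{M}_2(D)$ to $\mathcal{M}_2(D')$ against the weight function $o_2\mapsto\mathrm{Pr}[\mathcal{M}_1(D')\in S_{o_2}]$, which is not an indicator, is where I would either specialize $\mathcal{M}_2$ to pure DP or appeal to the standard ``bad-set'' lemma; I would state this dependence explicitly rather than gloss it, as it is the one place the argument goes beyond routine manipulation.
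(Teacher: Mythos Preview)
Your proposal is sound but follows a different route from the paper. The paper argues pointwise: for a fixed output $(o_1,o_2)$ it uses independence to write the joint likelihood as a product, inserts an additive $\delta$ into each numerator and denominator, and bounds each resulting factor by $e^{\epsilon_f}$ and $e^{\epsilon_s}$ to obtain $e^{\epsilon_f+\epsilon_s}$. It never passes through Theorem~1, any tail characterization of the privacy loss, or a set-slicing argument.

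Your two routes --- factorizing the privacy-loss random variable and union-bounding the two tail events, or slicing a target set $S$ along one coordinate and applying the two guarantees iteratively --- are the standard rigorous composition proofs. What they buy is exactly the care you already display: you correctly observe that either honest argument lands on $2\delta$ rather than the single $\delta$ asserted in the statement, and you name the usual remedies (one mechanism pure-DP, or absorb the constant). The paper's pointwise manipulation does not actually escape this issue either; the displayed line equating $\mathrm{Pr}[\mathcal{A}(D)=\mathcal{O}]+\delta$ to a ratio of products is not well-formed, and an inequality of the shape $\tfrac{p+\delta}{q+\delta}\le e^{\epsilon}$ is not what the $(\epsilon,\delta)$-DP definition supplies at a single output point. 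So your flagged ``main obstacle'' is real, and your treatment of it is the principal substantive improvement over the paper's argument.
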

\begin{proof}
Let $x,y\in D$ and fix $\forall \left ( o _1,o _2 \right ) \in \mathcal{O} _1\times\mathcal{O} _2$. 
Then
\begin{equation}
\begin{aligned}
&\mathrm{Pr} \left [ \mathcal{A}\left ( D \right ) =\mathcal{O}   \right ] +\delta \\
&=\frac{\left ( \mathrm{Pr} \left [ \mathcal{M}_1\left ( x \right ) =o_1 \right ] +\delta \right ) \left ( \mathrm{Pr} \left [ \mathcal{M}_2\left ( x \right ) =o_2 \right ] +\delta \right )  }{\left ( \mathrm{Pr} \left [ \mathcal{M}_1\left ( y \right ) =o_1 \right ] +\delta \right )\left ( \mathrm{Pr} \left [ \mathcal{M}_2\left ( y \right ) =o_2 \right ] +\delta \right )  }\\
&=\left ( \frac{\mathrm{Pr} \left [ \mathcal{M}_1\left ( x \right ) =o_1 \right ] +\delta}{\mathrm{Pr} \left [ \mathcal{M}_1\left ( y \right ) =o_1 \right ] +\delta } \right )\left ( \frac{\mathrm{Pr} \left [ \mathcal{M}_2\left ( x \right ) =o_2 \right ] +\delta}{\mathrm{Pr} \left [ \mathcal{M}_2\left ( y \right ) =o_2 \right ] +\delta } \right )\\
&\le \exp \left ( \epsilon _f \right ) \exp \left ( \epsilon _s \right ) =\exp \left ( \epsilon _f +\epsilon _s\right ) ,
\end{aligned}
\nonumber
\end{equation}
which shows that the combination algorithm $\mathcal{A}$ satisfies $\left ( \epsilon _f+\epsilon _s,\delta  \right ) $-DP. 
\end{proof}

\begin{table*}[t]
\caption{Summary of experimental results: “F1 score in NC and ROC-AUC score in LP” (\%). 
\vspace{-0.5em}
}
\centering 
\resizebox{\textwidth}{!}{
\setlength{\tabcolsep}{5mm}{
\begin{tabular}{lccccccccccc}
\toprule
\textbf{Dataset} & \multicolumn{2}{c}{\textbf{ACM}} & \multicolumn{2}{c}{\textbf{DBLP}} & \multicolumn{2}{c}{\textbf{Amazon}} &
\multicolumn{2}{c}{\textbf{IMDB}} \\ 
\textbf{Task} & NC & LP & NC & LP & NC & LP & NC & LP\\
\midrule
HGConv~\cite{HGCN2020}                   & \textbf{88.89} & 82.15 & 93.40 & 57.00 & 92.17 & 63.43 & 63.43 & 64.00\\
HGT~\cite{HGT2020}                       & 88.85 & 79.68 & \textbf{93.42} & 53.32 & 94.40 & 65.77 & \textbf{63.63} & 55.52\\
Metapath2vec~\cite{metapath2vec2017}     & 73.69 & $-$     & 92.80 & 44.58 & 78.33 & \textbf{88.86} & 48.81 & 67.69\\
RGCN~\cite{RGCN2018}                     & 82.67 & 63.28 & 87.70 & 58.10 & 94.50 & 65.03 & 61.30 & 74.32\\
HetGNN~\cite{HetGNN2019}                 & 82.82 & \textbf{89.99} & 90.43 & 55.73 & 70.61 & 72.37 & 54.78 & 59.24\\
\midrule
HeteDP (no)                           & 87.50 & 85.44 & 87.33 & \textbf{79.72} & \textbf{97.82} & 72.52 & 53.07 & \textbf{82.07}\\
HeteDP ($\epsilon$=0.01)                 & \underline{67.33} & \underline{71.92} & \underline{30.13} & \underline{62.83} & \underline{95.28} & \underline{60.20} & \underline{40.12} & \underline{74.37}\\
HeteDP ($\epsilon$=0.1)                  & 76.15 & 72.15 & 32.24 & 68.84 & 97.41 & 64.65 & 40.29 & 75.06\\
HeteDP ($\epsilon$=1)                    & 80.33 & 77.39 & 39.81 & 73.94 & 97.67 & 72.24 & 48.50 & 75.57\\
\bottomrule
\end{tabular}}
}
\vspace{-2em}
\label{summary results}
\end{table*}

\section{Experiments}
\subsection{Experimental Setup}
\begin{figure*}[!t]
\centering
\subfigure[LP on ACM.]{
		\includegraphics[width=0.30\textwidth, height=0.20\textwidth]{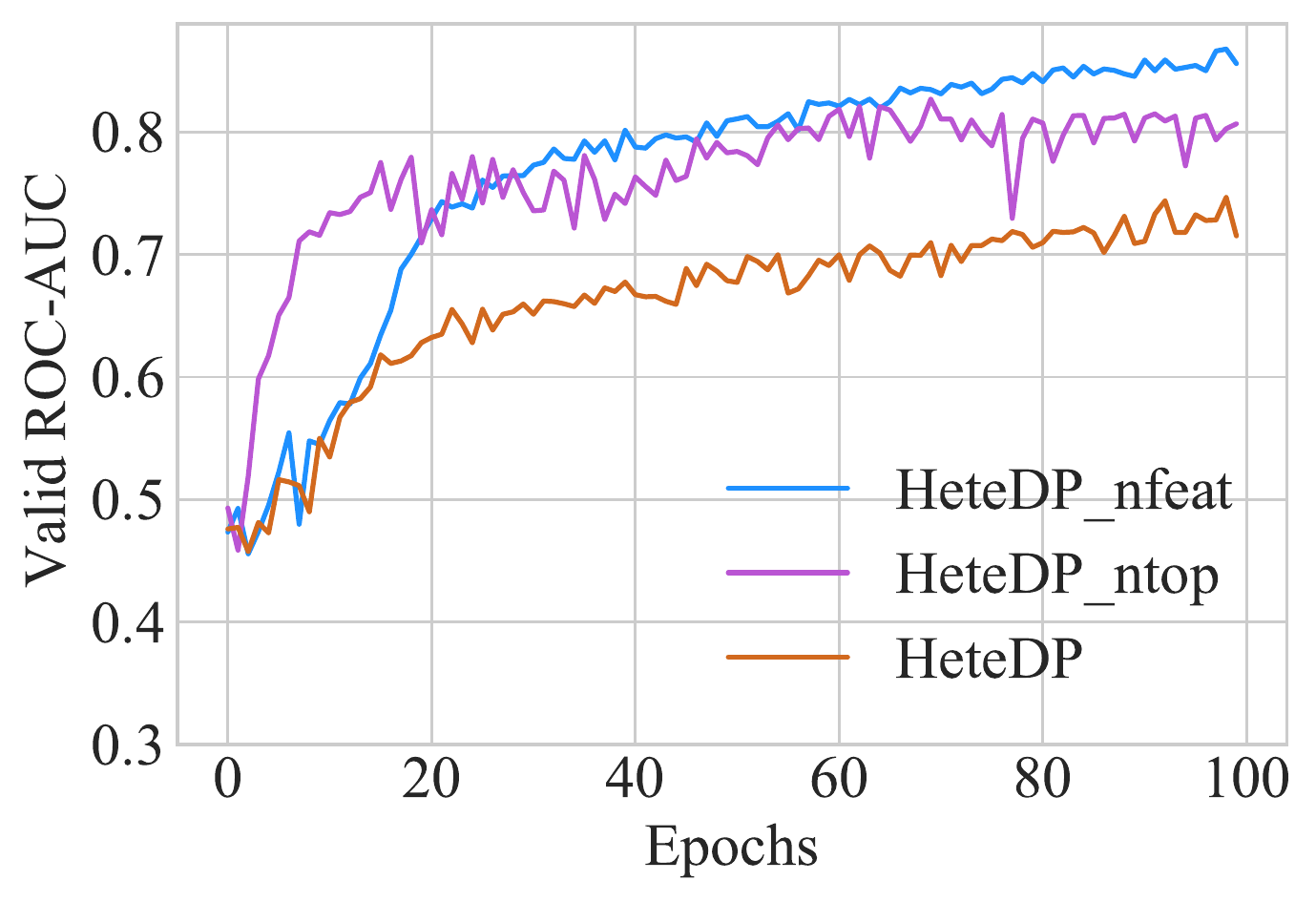}
	}
\hspace{0.5em}
\subfigure[LP on Amazon.]{
		\includegraphics[width=0.30\textwidth, height=0.20\textwidth]{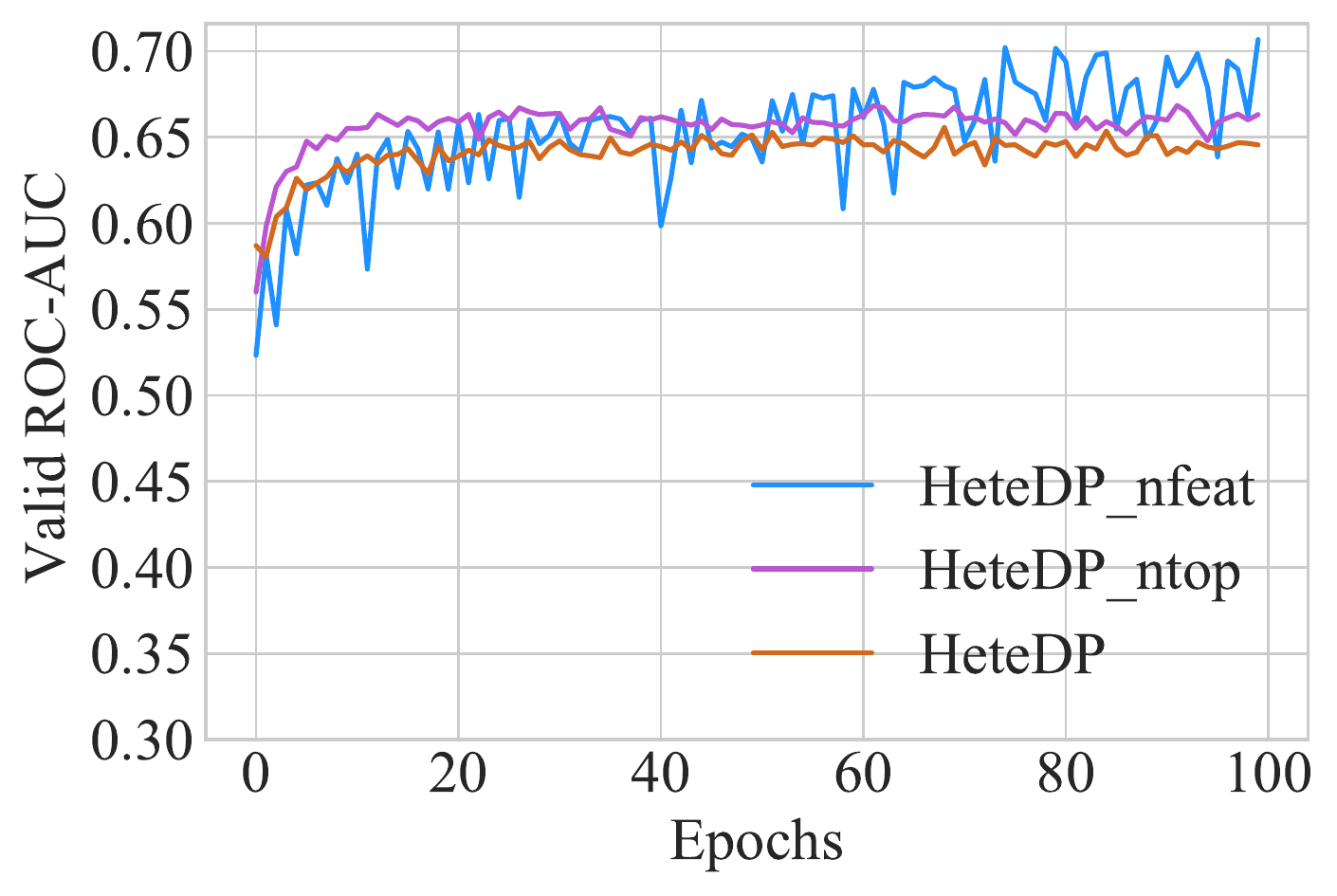}
	}
\hspace{0.5em}
\subfigure[LP on IMDB.]{
		\includegraphics[width=0.30\textwidth, height=0.20\textwidth]{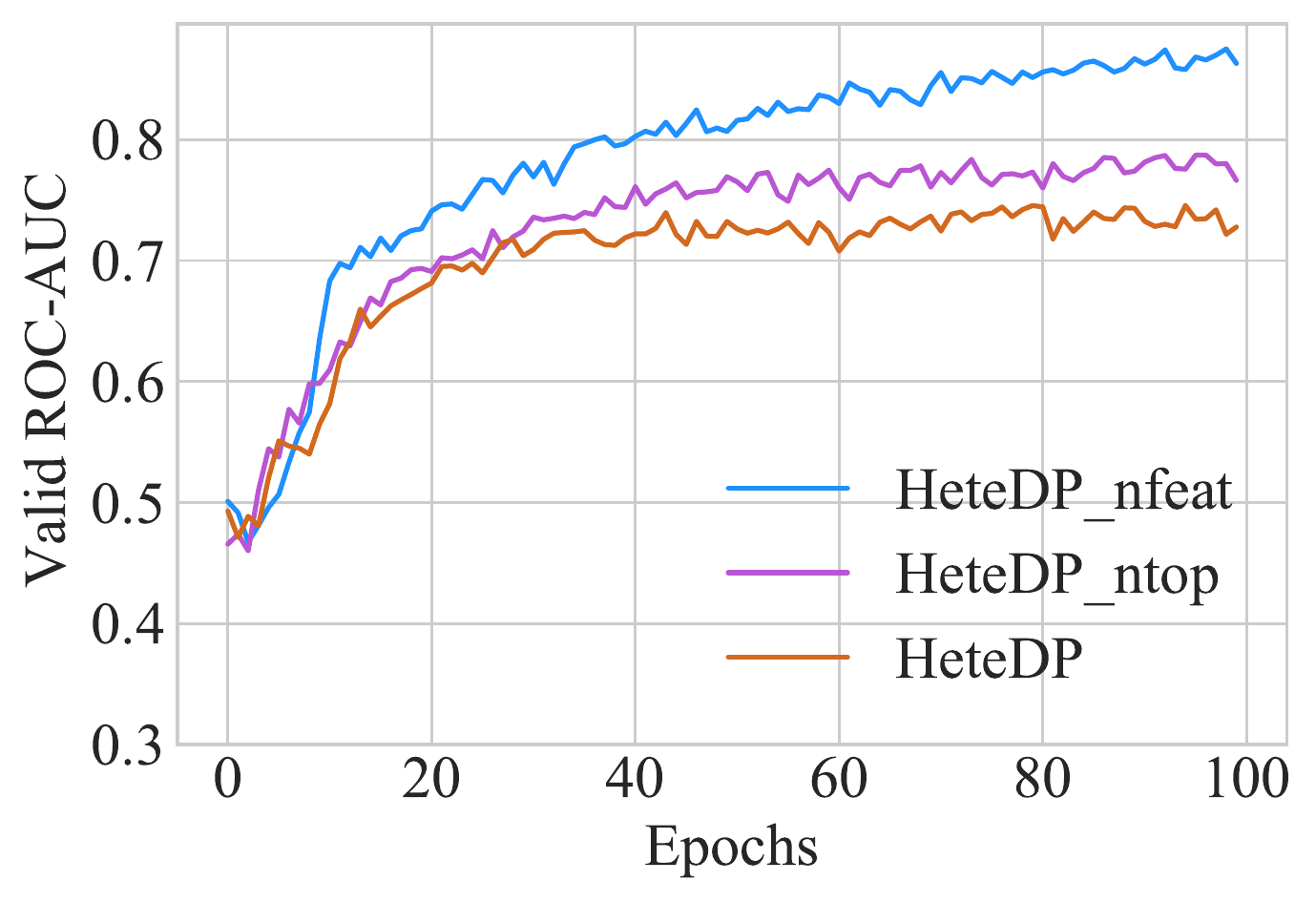}
	}
\vspace{-0.5em}
\caption{Ablation study of ROC-AUC scores of LP on validation set with $\epsilon=0.01$. }
\vspace{-2em}
\label{fig:ablation}
\end{figure*}

\begin{figure*}[!t]
\vspace{1em}
\centering
\subfigure[Original model.]{
	\includegraphics[width=0.21\textwidth, trim=10 10 10 10, clip]{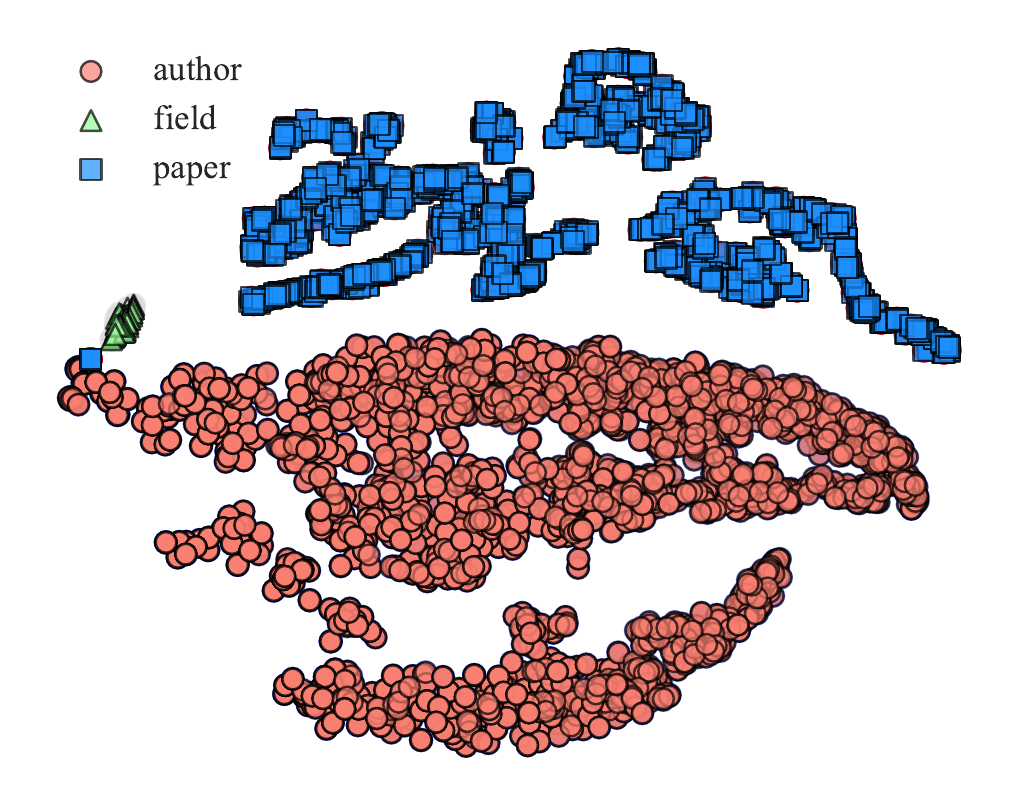}
}
\hspace{0.5em}
\subfigure[Feature perturbed.]{
	\includegraphics[width=0.21\textwidth, trim=10 10 10 10, clip]{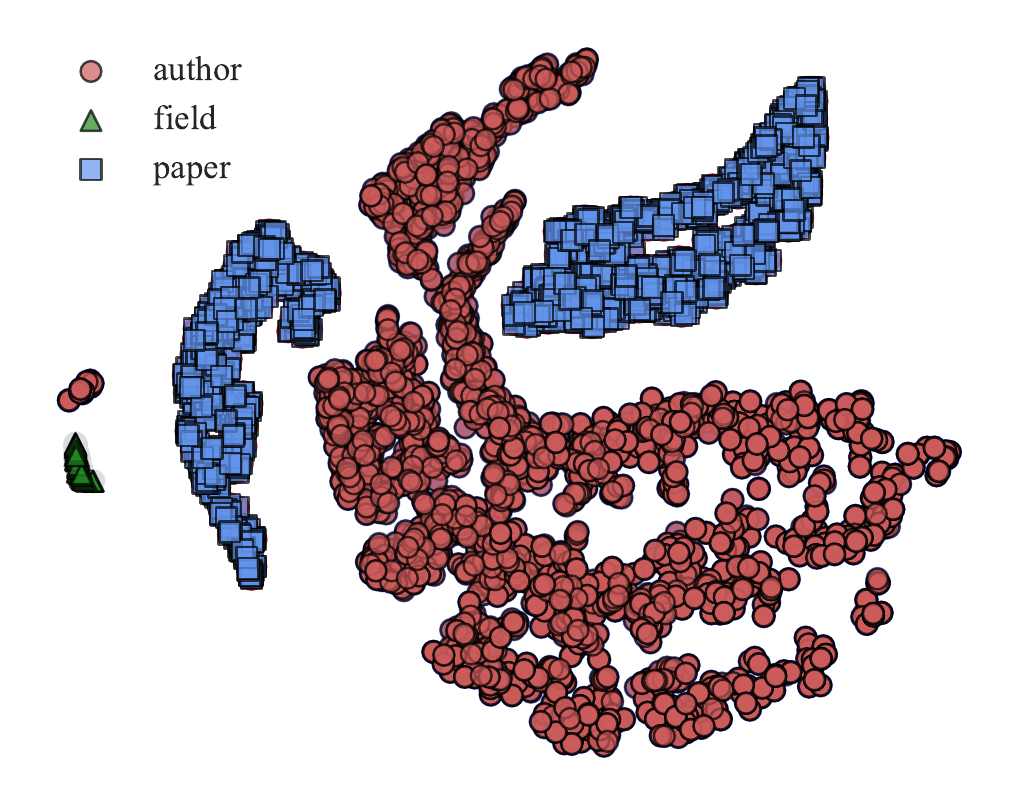}
}
\hspace{0.5em}
\subfigure[Topology perturbed.]{
	\includegraphics[width=0.21\textwidth, trim=10 10 10 10, clip]{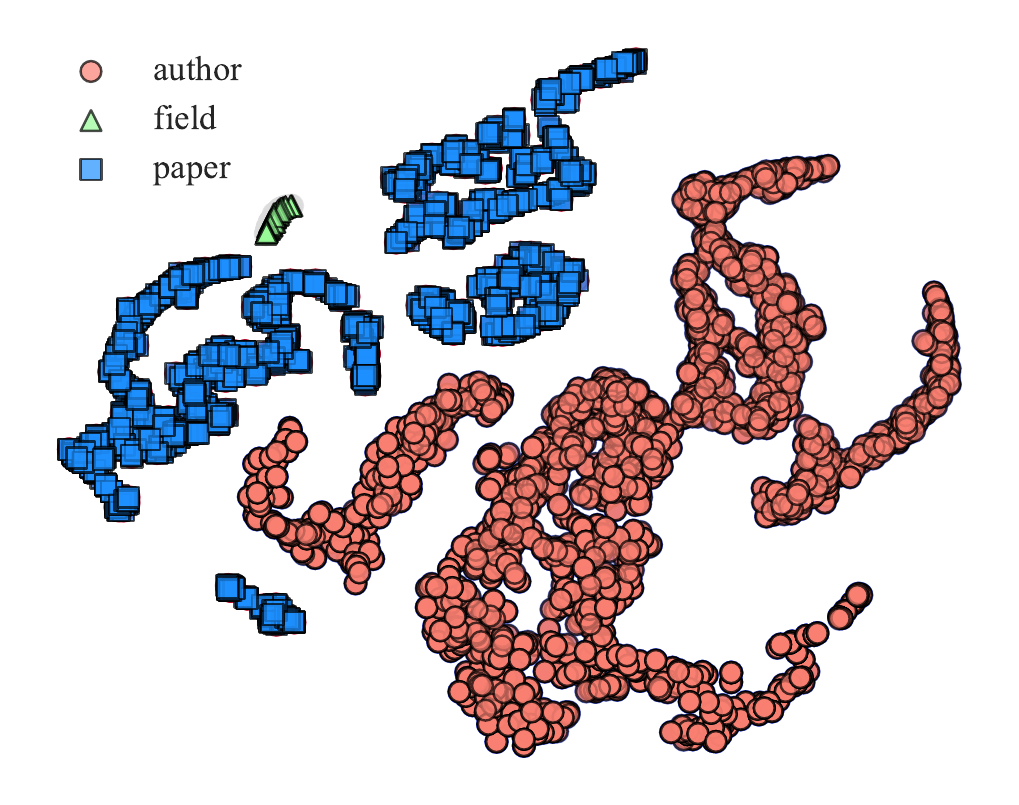}
}
\hspace{0.5em}
\subfigure[HeteDP.]{
	\includegraphics[width=0.21\textwidth, trim=10 10 10 10, clip]{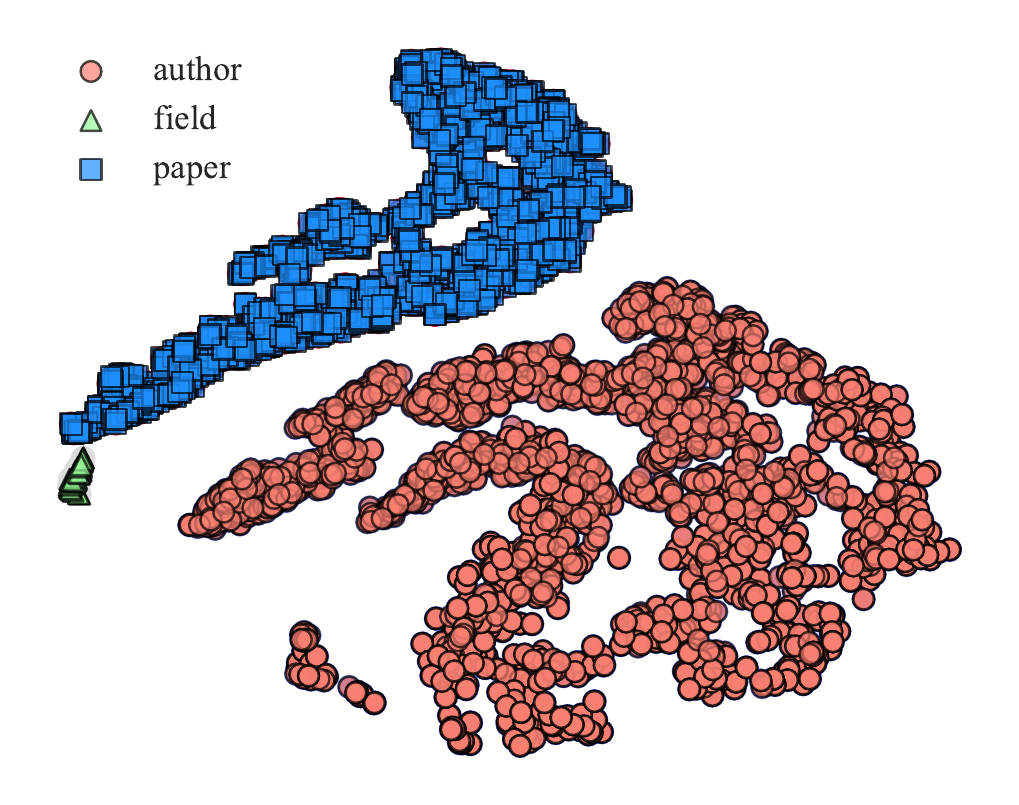}
}
\vspace{-0.5em}
\caption{The visualisation of node types on ACM. 
}
\vspace{-2em}
\label{fig:ncVisualisation}
\end{figure*}
In this section, we conduct experiments in all aspects on four datasets and two tasks to demonstrate the adaptability of heterogeneity privacy-preserving and the effectiveness of graph learning. 
The experiment results of HeteDP are shown in Table~\ref{summary results}, where the best accuracy shown in bold and the best privacy-preserving results are are underlined. 
Furthermore, ``$-$" indicates that the current model hardly implement in the dataset. 
We then further analyze how HeteDP is affected by changing the strength of privacy-preserving, and our contribution to the overall performance of the optimization model.

\begin{table}[!t]
\caption{Statistics of Datasets. }
\vspace{-0.5em}
\centering
\begin{tabular}{lrrrr}
\toprule
\textbf{Dataset} & \textbf{\# Nodes} & \textbf{\# Edges} \\
\midrule
\multirow{3}*{ACM}      & author: 17,351   &  \textbf{paper-author}: 13,407  \\
                       ~& \textbf{paper}: 4,025     & paper-field: 4,025    \\
                       ~& field: 72        &                      \\
\midrule
\multirow{4}*{DBLP}     &  \textbf{author}: 4,025    &  \textbf{paper-author}: 19,645   \\
                       ~& paper: 14,328    & paper-conf: 14,328   \\
                       ~& conf: 20         & paper-term: 85,810   \\
                       ~& term: 7,723      &    \\
\midrule
\multirow{4}*{Amazon}   & user: 6,170      &  \textbf{user-item}: 195,791   \\
                       ~&  \textbf{item}: 2,753      & item-view: 5,694   \\
                       ~& view: 3,857      & item-category: 5,508   \\
                       ~& category: 22     &    \\
\midrule
\multirow{3}*{IMDB}     &  \textbf{movie}: 4,278     &  \textbf{movie-actor}: 12,828  \\
                       ~& actor: 5,257     & movie-director: 4,278    \\
                       ~& director: 2,081  &                      \\
\bottomrule
\end{tabular}
\vspace{-2em}
\label{banchmark dataset}
\end{table}
\textbf{Datasets. } 
We use four open datasets, including citation networks (ACM and DBLP), an E-commerce dataset (Amazon), and a relational movie network (IMDB). 
The dataset statistics are shown in Table~\ref{banchmark dataset}. 
We mark the classified nodes and the predicted edges with bolded. 
For example, in the downstream task of the ACM dataset, we perform node classification for ``paper" and link prediction for ``paper-author". 

\textbf{Baselines. } 
We compare the HeteDP with state-of-the-art heterogeneous baseline methods, including HGConv~\cite{HGCN2020}, HGT~\cite{HGT2020}, metapath2vec~\cite{metapath2vec2017}, RGCN~\cite{RGCN2018}, and HetGNN~\cite{HetGNN2019}. 
In addition, based on these models, we extend the corresponding unsupervised link prediction task, and they omit privacy-preserving. 
The details of these methods are as follows: 
\begin{itemize}[leftmargin=*]
    \item \textbf{HGConv}~\cite{HGCN2020} introduces node representation based on mixed micro/macro level convolution operations on heterogeneous graphs. 
    A micro-level convolution can learn the dependency of nodes under the constraints of the same relation, and a macro-level convolution is used to distinguish subtle differences between relation types. 
    \item \textbf{HGT}~\cite{HGT2020} introduces a attention mechanisms to learn the correlation of node-type and edge-type, which can capture dynamic information about the network, avoid formulate meta-paths, and be better used on large-scale graphs. 
    \item \textbf{Metapath2vec}~\cite{metapath2vec2017} employs the meta-paths-based random walk on the skip-gram method  to reserve semantic information for heterogeneous graph embedding. 
    \item \textbf{RGCN}~\cite{RGCN2018} uses parameter sharing and sparse constraint techniques, applied to handle large amounts of multi-relational data, and has significant results in entity classification. 
    \item \textbf{HetGNN}~\cite{HetGNN2019} considers each node's heterogeneous content (node's attribute information) and uses random walk to sample a fixed number of strongly associated heterogeneous neighbors for graph nodes, and then uses BiLSTMs to process the heterogeneous information. 
    Since the concatenated edges between nodes of the same type are not included in our dataset, we ignore the fusion of node representations in this part. 
\end{itemize}

\textbf{Settings. } 
We set the parameters of feature learning and topology learning separately, with learning rates $lr$ of $0.005$ and $0.001$, epsilon $\epsilon$ from $0.01$ to $1$ and $0.01$ to $0.5$, hidden layer dimension of $64$ and $32$. 
The common parameters set epoch to $100$ and the probabilistic of breaking privacy-preserving $\delta$ to $1e-5$. 
The categories of node classification and edge prediction set for each data select follow Table~\ref{banchmark dataset}. 
We count the meta-paths $m$ for each node type to include all links as much as possible. 
The number of layers depends on the meta-paths $m$ and the types of edges $R$ in the graph. 
In addition, we set the unique parameters in the first part of learning as the dropout of training to $0.8$, the regularization coefficients to $0.001$, the number of heads of the multi-headed attention mechanism $K$ to $8$, and a hyperparameter $\lambda$ to $0.01$. 
In topology learning, we set the batch size $\left | B \right | $ to $2048$ and the number of negative sampling $k$ to $5$. 
We follow the dataset split setting in~\cite{VGAE2016}. 
For the baseline models, the parameters are set as the default values in their papers.

\subsection{Performance Comparison}
We set up two downstream tasks to test the performance of our proposed method, node classification (NC) and link prediction (LP). 
Table~\ref{summary results} summarizes the performance of HeteDP in different downstream task and on four datasets, comparing with the baseline methods, which reflects the inherent generalizability of the HeteDP and the effect when it has differential privacy-preserving. 

\textit{For the node classification task}, we consider the practice of unsupervised node classification~\cite{graphsage2017}, using negative sampling of edges for training and $2$-order neighbor sampling at each iteration of validation. 
We use the F1 score as a classification effectiveness measure. 
The experimental results show that HeteDP reduces the F1 score of node classification by at least 12.95\% on IMDB. 
\textit{For the link prediction task}, we extend the sampler~\cite{graphsage2017} to negative sampling on heterogeneous graphs, sampling $k$ negative pairs for each edge. 
Each training randomly selects a specific size of data to form batch training. 
The encoder consists of heterogeneous convolutional layers Eq.~\eqref{eq:HeteGCN}, Eq.~\eqref{eq:HeteGCNLayer}, and the decoder calculates the scores of positive and negative sample pairs by inner product, respectively. 
So the probability of successful link prediction is a mapping of the recommended probability. 
We utilize the ROC-AUC score as an indicator to judge the performance of HeteDP. 
In terms of ROC-AUC score, HeteDP also reduces by 13.52\% and 7.7\% on ACM and IMDB. 

Overall, in the LP task of DBLP and IMDB, compare to the second-best model, our proposed original model improves performance over $21.62\%$ and $7.75\% $.
The noise of different sensitivities to each dataset brings diverse levels of influence. 
Still, in general, the model accuracy improves in different magnitudes with an increasing privacy budget, such as the ROC-AUC score of Amazon is only reduced by $0.28\%$ with $\epsilon=1$.
It shows that the generalization ability of our model is guaranteed to a certain extent, and the model can maintain the utility of the data under the influence of noise.
Similar to what was elaborated above, the ACM dataset has an accuracy reduction of about $14\%$ on the LP task when setting the privacy-preserving strength of $\epsilon=0.01$.
It shows that our proposed privacy-preserving method is resistant to graph topological inference attacks to a certain extent. 

\subsection{Further Analysis}
\textbf{Ablation study}. We further conduct ablation experiments to assess the importance degree of two parts of our proposed. 
We design a total of three experiments for the LP task for comparison: 
the first is to only protect the features of various node types by Eq.~\eqref{eq:nodeNoise} in feature learning and then use the learned node representation with noise for topology learning; 
second, the representation of features aggregates the information of node neighbors and semantics to use for topology learning, while the link relationship is protected during the topology learning process with Eq.~\eqref{eq:topNoise}; 
the final set is the node feature and topology data are double-protected. 
Their privacy budget is $0.01$ and results are shown in \figurename~\ref{fig:ablation}. 
From the experimental results, we can observe that the perturbation of each part is effective.
Compared with the node feature disturbance, the disturbance to the topology structure affects the link prediction accuracy more.
Nevertheless, our training eventually reaches convergence and maintains some utility. 

Furthermore, we visualize node types to observe the utility of privacy-preserving in \figurename~\ref{fig:ncVisualisation}. 
It shows the embedding visualization of all nodes in ACM using t-SNE~\cite{van2008visualizing}, where the different colors indicate node types. 
We design original model, feature perturbed, topology structure perturbed and HeteDP experiments, where the privacy budget is $3$. 
The visualization from left to right generally shows increasingly tight clustering among similar nodes. 
For feature perturbation, we observe that the spacing within the ``paper" node class becomes smaller, which affects the classification effect within that node. 
The boundary between the three-node types is always clear. 
For topological perturbations, it causes a large change in the position of individual nodes even at higher $\epsilon$.
This perturbation phase has a lower impact within the node class, while the different node types become more dispersed.
Finally, compare with the original model, our model reflects the ability to reduce the gap between different types of nodes to some extent compared, which exactly proves the effectiveness of node feature protection and maintains the utility of the HeteDP. 

\begin{figure}[t]
	\centering
	\includegraphics[width=0.48\textwidth, height=0.28\textwidth]{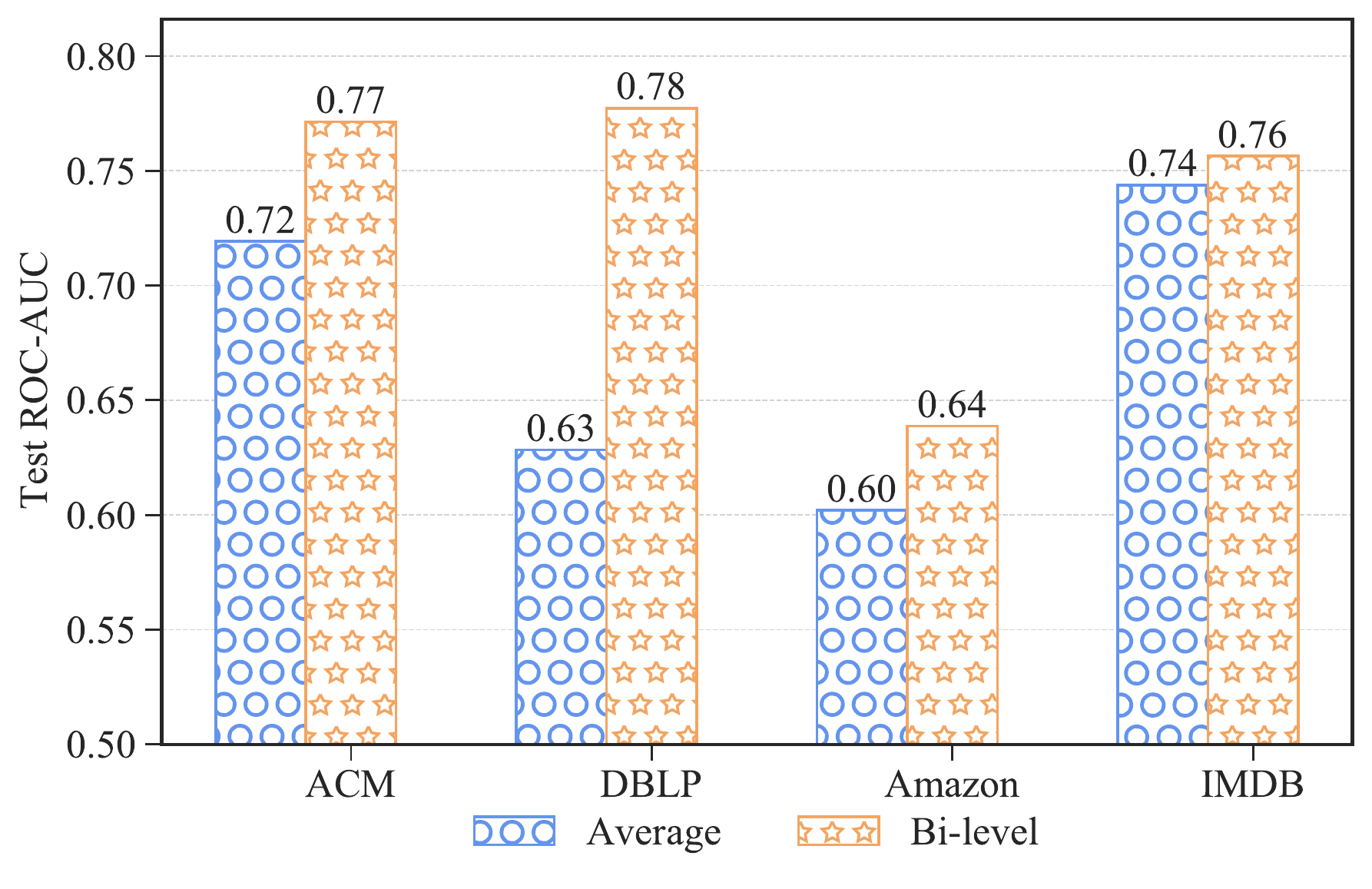}
    \vspace{-1em}
	\caption{Bi-level experiments of ROC-AUC scores of LP on different privacy budget allocation options. 
}
\vspace{-2em}
\label{fig:bilevel}
\end{figure}

\textbf{Bi-level optimization}. Privacy budget allocation has long been an essential task in privacy-preserving. 
The task aims to reduce the probability of data being accessed by attackers, weigh the training accuracy, and consider the problem of model performance degradation due to privacy noise. 
To further improve the utility of the model in privacy-preserving, we design a bi-level optimization trick to allocate the privacy budget of Gaussian noise on the feature part and the topological part. 
We fix the topology noise unchanged and seek the optimal privacy budget allocation on node features by experiments in a specific interval according to Eq.~\eqref{eq:problem}. 
Next, we fix the amount of noise on features to find the optimal privacy budget on topology. 
The results are shown in \figurename~\ref{fig:bilevel}. 
The figure compares the effect of an equally divided privacy budget and a bi-level optimized privacy budget and shows that bi-level optimization can bring better performance for the model, which achieves the purpose of the trade-off between protection power and utility. 

\textbf{Sensitivity Analysis}. We analyze the sensitivity of the overall noise of HeteDP. 
Specifically, we test the extent to which the parameter $\epsilon$ influences our model on the LP task. 
We set $9$ values of $\epsilon$ on ACM, IMDB and DBLP, as shown in \figurename~\ref{fig:epsilon}. 
We observe that the ACM dataset achieves a score close to $80\%$ at $\epsilon=1$, which is nearly $8\%$ higher than $\epsilon=0.01$. 
IMDB, however, is not as sensitive to $\epsilon$ because the network structure of this dataset is fragile, and it is harder to improve the learning ability once it is disturbed. 
The experiments show that different datasets have myriads of changes in sensitivities to the privacy budget due to inconsistencies in their own data distributions, and it is necessary to find a suitable noise range to protect the model and maximize its effectiveness.

\begin{figure}
	\centering
	\includegraphics[width=0.48\textwidth, height=0.28\textwidth]{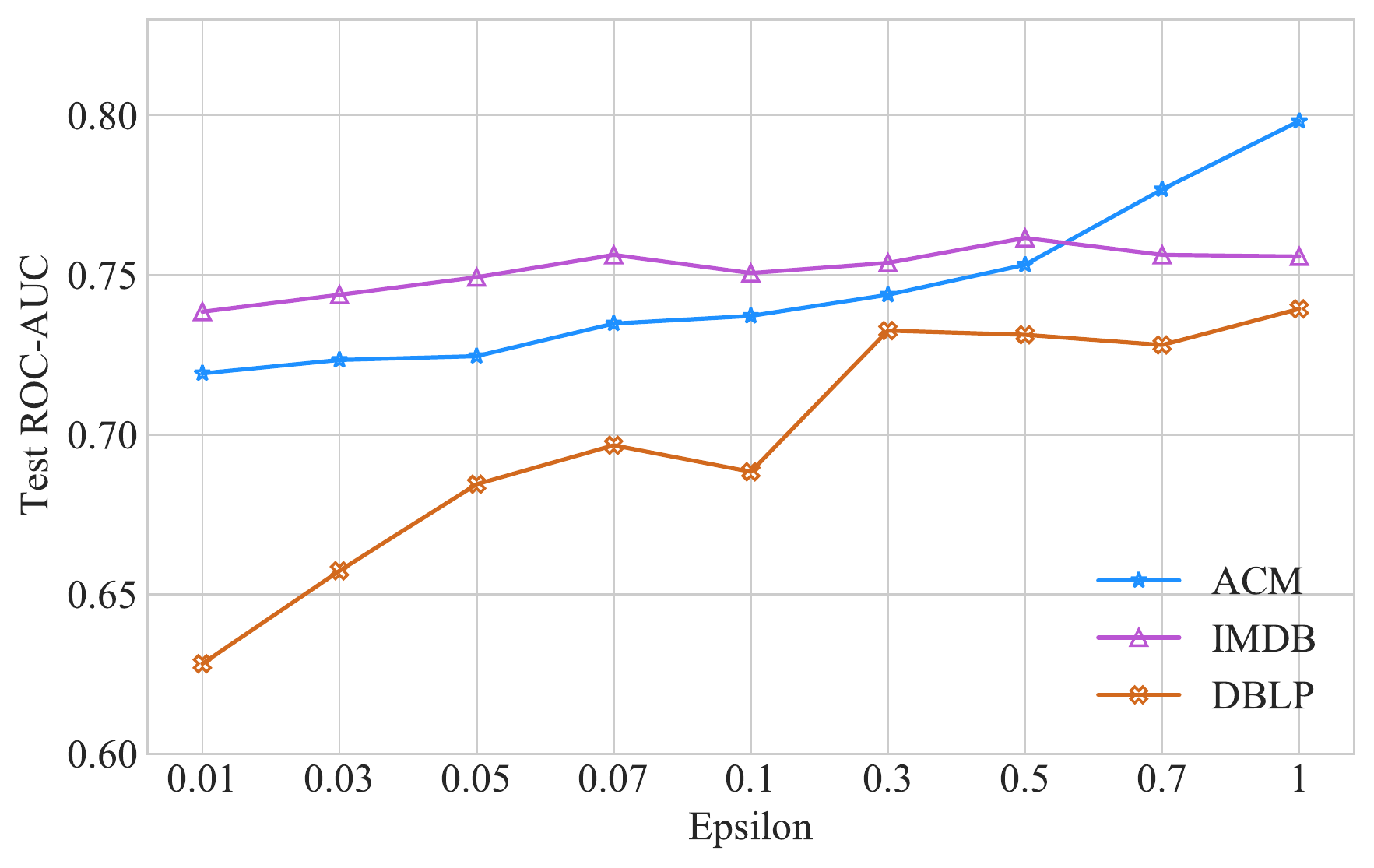}
    \vspace{-1em}
	\caption{Sensitivity experiments on ACM and IMDB. 
}
\vspace{-1em}
\label{fig:epsilon}
\end{figure}

\section{Conclusion}
In this work, we propose HeteDP, a novel privacy-preserving graph learning framework for heterogeneous graph.
We propose a two-stage privacy-preserving mechanism based on differential privacy, capable of adapting to the heterogeneity of heterogeneous graphs. 
For multi-type nodes and relationships on heterogeneous graphs, we learn the representation distribution and aggregation of nodes on each relationship through multi-relational convolutional layers, and adapt to various downstream tasks through unsupervised learning. 
Considering that nodes and links are vulnerable to inference attacks in heterogeneous graph scenarios, we perturb the node features and the topological structure, respectively.
Then, we balance the privacy budget allocation of the node feature and the topology, and achieve the best performance by bi-level optimization. 
Comprehensive experiments on four datasets demonstrate the privacy-preserving capability and adaptability of HeteDP. 
We hope that our work could bring some inspiration to privacy-preserving in more complex graph data.

\section*{Acknowledgment}

This paper was supported by the National Natural Science Foundation of China (Nos. 62162005, 61763003 and U21A20474) and the Innovation Project of Guangxi Graduate Education (XYCSZ2022020), Research Fund of Guangxi Key Lab of Multi-source Information Mining \& Security (No. 19-A-02-01), Guangxi 1000-Plan of Training Middle-aged/Young Teachers in Higher Education Institutions, Guangxi ``Bagui Scholar'' Teams for Innovation and Research Project, Guangxi Talent Highland Project of Big Data Intelligence and Application, Guangxi Collaborative Innovation Center of Multisource Information Integration and Intelligent Processing.

\bibliographystyle{IEEEtran}
\bibliography{reference}

\end{document}